\DeclareSIUnit\px{px}
\definecolor{linkcolor}{rgb}{0,0,0}
\let\alloc@latex\alloc@
\def\alloc@#1#2#3#4{\newcount}
\setlist{itemsep=0.1em,topsep=0.5em,parsep=0pt,partopsep=0pt}
\DeclareMathAlphabet{\mathcal}{OMS}{cmsy}{m}{n}
\DeclareMathAlphabet{\mathbb}{U}{msb}{m}{n}
\DeclareSymbolFont{cmsymbols}{OMS}{cmsy}{m}{n}
\DeclareSymbolFont{cmlargesymbols}{OMX}{cmex}{m}{n}
\let\sum\relax
\let\prod\relax
\let\int\relax
\let\oint\relax
\let\bigcup\relax
\let\bigcap\relax
\DeclareMathSymbol{\sum}{\mathop}{cmlargesymbols}{"50}
\DeclareMathSymbol{\prod}{\mathop}{cmlargesymbols}{"51}
\DeclareMathSymbol{\int}{\mathop}{cmlargesymbols}{"52}
\DeclareMathSymbol{\oint}{\mathop}{cmlargesymbols}{"48}
\DeclareMathSymbol{\bigcup}{\mathop}{cmlargesymbols}{"53}
\DeclareMathSymbol{\bigcap}{\mathop}{cmlargesymbols}{"54}
\def\th@plain{%
  \thm@notefont{}
  \itshape 
}
\def\th@definition{%
  \thm@notefont{}
  \normalfont 
}
\declaretheoremstyle[
  notefont=\bfseries,
  mdframed={
    linewidth=1pt,
    nobreak=true,
    innerbottommargin=1em,
    skipabove=1em,
    skipbelow=1em,
  }
]{boxedstyle}
\declaretheorem[style=boxedstyle, name=Theorem, numberwithin=chapter]{boxedtheorem}
\declaretheorem[name=Theorem, numberwithin=chapter, sibling=boxedtheorem]{theorem}
\declaretheorem[name=Propositon, numberwithin=chapter, sibling=theorem]{proposition}
\declaretheorem[name=Example, numberwithin=chapter, sibling=theorem]{example}
\declaretheorem[name=Remark, numberwithin=chapter, sibling=theorem]{remark}
\declaretheorem[style=boxedstyle, name=Remark, numberwithin=chapter, sibling=theorem]{boxedremark}
\def\mK{{\bm{K}}}
\DeclareMathAlphabet{\mathsfit}{\encodingdefault}{\sfdefault}{m}{sl}
\SetMathAlphabet{\mathsfit}{bold}{\encodingdefault}{\sfdefault}{bx}{n}
\def\gA{{\mathcal{A}}}
\def\gX{{\mathcal{X}}}
\newcommand{\E}{\mathbb{E}}
\newcommand{\R}{\mathbb{R}}
\newcommand{\Var}{\mathrm{Var}}
\DeclareMathOperator*{\argmax}{argmax}
\DeclareMathOperator*{\argmin}{argmin}
\newcommand{\N}{\mathcal{N}}
\renewcommand{\R}{\mathbb{R}}
\renewcommand{\E}{\mathrm{\mathbb{E}}}
\newcommand{\D}{\mathcal{D}}
\newcommand{\norm}[1]{\Vert #1 \Vert}
\newcommand{\abs}[1]{\vert #1 \vert}
\newcommand{\inv}{{-1}}
\newcommand{\inner}[1]{\langle #1 \rangle}
\newcommand{\X}{\mathcal{X}}
\newcommand{\I}{\mathbb{I}}
\renewcommand{\Pr}{\mathbb{P}}
\newcommand{\A}{\mathcal{A}}
\renewcommand{\O}{\mathcal{O}}
\newcommand{\GP}{\mathcal{GP}}
\renewcommand{\H}{\mathcal{H}}
\def\th@plain{%
  \thm@notefont{}
  \itshape 
}
\def\th@definition{%
  \thm@notefont{}
  \normalfont 
}
\def\thm@space@setup{%
  \thm@preskip=1em
  \thm@postskip=\thm@preskip 
}
\newcommand{\defword}[1]{\textbf{\emph{#1}}}
\newcommand{\rnum}[1]{%
  \textup{\uppercase\expandafter{\romannumeral#1}}%
}
\pgfplotsset{compat=newest}
\pgfplotsset{compat=1.11,
  /pgfplots/ybar legend/.style={
    /pgfplots/legend image code/.code={%
      \draw[##1,/tikz/.cd,yshift=-0.25em]
    (0cm,0cm) rectangle (3pt,0.8em);},
  },
}
\colorlet{tablegray}{gray!15}
\definecolor{laplace1}{RGB}{38, 81, 206}
\definecolor{laplace2}{RGB}{232, 220, 87}
\definecolor{laplace3}{RGB}{238, 238, 238}
\definecolor{color0}{rgb}{0.00392156862745098,0.450980392156863,0.698039215686274}
\definecolor{color1}{rgb}{0.870588235294118,0.56078431372549,0.0196078431372549}
\definecolor{color2}{rgb}{0.00784313725490196,0.619607843137255,0.450980392156863}
\definecolor{color3}{rgb}{0.835294117647059,0.368627450980392,0}
\definecolor{color4}{rgb}{0.8,0.470588235294118,0.737254901960784}
\definecolor{color5}{rgb}{0.792156862745098,0.568627450980392,0.380392156862745}
\definecolor{color6}{rgb}{0.984313725490196,0.686274509803922,0.894117647058824}
\definecolor{color7}{rgb}{0.925490196078431,0.882352941176471,0.2}
\definecolor{color8}{rgb}{0.337254901960784,0.705882352941177,0.913725490196078}
\colorlet{gcolor0}{white}
\colorlet{gcolor1}{gray!15}
\colorlet{gcolor2}{gray!30}
\colorlet{gcolor3}{gray!45}
\colorlet{gcolor4}{gray!60}
\colorlet{gcolor5}{gray!75}
\colorlet{gcolor6}{gray!90}
\colorlet{gcolor7}{gray}
\colorlet{gcolor8}{black}
\definecolor{darkgray176}{RGB}{176,176,176}
\definecolor{darkslategray66}{RGB}{66,66,66}
\newcolumntype{H}{>{\setbox0=\hbox\bgroup}c<{\egroup}@{}}
\newsavebox{\abstractbox}
\newenvironment{abstract}
{
  \begin{lrbox}{0}
    \begin{minipage}{\textwidth}
      \begin{center}\normalfont\bfseries\abstractname
    \end{center}\quotation}
    {\endquotation
    \end{minipage}
  \end{lrbox}%
\global\setbox\abstractbox=\box0 }
\title{Introduction to the Analysis of Probabilistic Decision-Making Algorithms}
\author{
  \textbf{Agustinus Kristiadi} \\[-0.25em]
  {\normalsize Western University and Vector Institute, Canada} \\[-0.25em]
  {\normalsize\url{https://agustinus.kristia.de}}
}
\date{}
\begin{document}

\renewcommand{\bibname}{References}
\def\chmoniker{Chapter}

\frontmatter

\begin{abstract}
  Decision theories offer principled methods for making choices under various types of uncertainty.
  Algorithms that implement these theories have been successfully applied to a wide range of real-world problems, including materials and drug discovery.
  Indeed, they are desirable since they can adaptively gather information to make better decisions in the future, resulting in data-efficient workflows.
  In scientific discovery, where experiments are costly, these algorithms can thus significantly reduce the cost of experimentation.
  Theoretical analyses of these algorithms are crucial for understanding their behavior and providing valuable insights for developing next-generation algorithms.
  However, theoretical analyses in the literature are often inaccessible to non-experts.
  This monograph aims to provide an accessible, self-contained introduction to the theoretical analysis of commonly used probabilistic decision-making algorithms, including bandit algorithms, Bayesian optimization, and tree search algorithms.
  Only basic knowledge of probability theory and statistics, along with some elementary knowledge about Gaussian processes, is assumed.
\end{abstract}

\maketitle

\mainmatter


\chapter{Decision-Making}
\label{ch:decision}

\epigraph{
  ``\textit{Philosophically minded students of probability nimbly skip among these different ideas \textnormal{[frequentist and Bayesian]}, and take pains to say which probability concept they are employing at the moment. The vast majority of the practitioners of probability do no such thing. They go on talking of probability, doing their statistics and their decision theory oblivious to all this accumulated subtlety. [...] Extremists of one school or another argue vigorously that the distinction is a sham, for there is only one kind of probability.}''}{\citet[pp.~14]{hacking2006emergence}}

Humans make decisions constantly: ``What to eat for dinner?'', ``Which university to attend?'', ``What is a good rule-of-thumb when arriving in a foreign place?'', etc.
Artificial intelligence (AI) systems can also benefit from human-like decision-making.

To formulate decision-making processes, decision theory has been developed \citep{wald1949decision}.
Let \( \D \) be \defword{data} and \( f \) a \defword{latent variable/parameter}, generated under an \emph{unknown} joint distribution \( p(\D, f) \).
Furthermore, let \( a \in \gA \) be the set of all possible \defword{actions} that can be taken, and let \( u(f, a) \) be a \defword{utility function} that measures the ``compatibility'' of \( a \) and \( f \).\footnote{An alternative view, common in the frequentist formalism, is to replace the utility function with a \emph{loss function}, which, w.l.o.g., can be taken as \( -u \).}

\begin{example} \label{ex:decision:vars}
  \( \D \) could be a list of symptoms and \( f \) a disease.
  The set of \( \gA \) contains possible drugs that can be taken.
  The utility function \( u \) indicates the effectiveness of a drug against a disease.
\end{example}

There are two, non-mutually-exclusive ways to view statistical decision-making: Bayesian and frequentist.
As indicated in the epigraph of this chapter, in this text, we accept that they are both valid and useful for different purposes.

\section{Bayesian decision theory}
\label{sec:decision-making:bayesian}

Bayesian statistics assumes that probabilities represent \emph{beliefs}: ``I am 50\% sure'', ``I am quite confident that this will work'', etc.
Decision-making under this framework is then used to take an action based on one's (e.g., a model's) belief about the unknown.
Indeed, \defword{Bayesian decision theory} assumes that \( f \) is \emph{unknown} and \( \D \) is \emph{observed}, and concerns in finding the best action \( a_* \) under the \emph{posterior belief} \( p(f \mid \D) \) and the utility function \( u \):
\begin{equation}
  a_* = \argmax_{a \in \gA} \E_{f \sim p(f \mid \D)}[u(f, a)] .
\end{equation}

\begin{example}
  Under the setting of \cref{ex:decision:vars}, a doctor can assess their belief \( p(f \mid \D) \) of a patient having a disease \( f \) after observing the patient's symptoms \( \D \).
  The doctor's utility function \( u \) encodes their preferences, e.g.\ whether they are rather risk-averse or risk-taking.
  The doctor then prescribes a drug \( a_* \) that maximizes their expected utility under posterior belief.
\end{example}

\begin{example}[Pascal's Wager]
  Let \( \gA = \{ \text{``Believe in God}, \text{``Don't believe in God''} \} \) and let \( f \in \{ \text{``God exists''}, \text{``God doesn't exist''} \} \).
  Suppose our utility weighs the potential \textbf{eternal} ``reward'' or ``punishment'' in heaven and hell, respectively.
  It makes sense, therefore, for someone to have the utility function \( u(f, a) \) s.t.:
  \begin{itemize}
    \item \( u(\text{``God exists''}, \text{``Believe in God''}) = \infty \),
    \item \( u(\text{``God doesn't exist''}, \text{``Believe in God''}) = a \) where \( -\infty < a < 0 \); notice that while this is undesirable (negative utility), \( a \) is finite,
    \item \( u(\text{``God exists''}, \text{``Don't believe in God''}) = -\infty \),
    \item \( u(\text{``God doesn't exist''}, \text{``Don't believe in God''}) = b \) where \( 0 < b < \infty \); where the argument here is that we gain something that is finite in our lifetime, but nothing more.
  \end{itemize}
  In this case, the optimal action \( a_* \) is to ``believe in God'', even if \emph{a posteriori}, \( p(\text{``God exists''} \mid \D) \) is very small.
  Do note that different utility functions will yield different \( a_* \).
\end{example}

\section{Frequentist decision theory}
\label{sec:decision-making:frequentist}

Frequentist statistics assumes that probabilities represent \emph{long-running relative frequency}: ``What is the occurrence of a disease in a given population?'', ``The error rate of this program is 1\%'', etc.
That is, if we sample the data again and again, what is the proportion of a case of interest?
Notice that, here, the data is therefore assumed to be random.

Thus, in contrast to the previous section, \defword{frequentist decision theory} assumes that \( f \), while still unknown, is fixed and \( \D \) is generated through \( \D \sim p(\D \mid f) \).
Notice that, here, the roles of \( \D \) and \( f \) are reversed compared to their roles in the Bayesian decision theory.
Since \( \D \) is now random, we aim to find an optimal \emph{function} \( \delta_* \) that maps a realization of data to an action:
\begin{equation}
  \delta_* = \argmax_{\delta} \E_{\D \sim p(\D \mid f)}[u(f, \delta(\D))] .
\end{equation}
or, equivalently,
\begin{equation}
  \delta_* = \argmin_{\delta} \E_{\D \sim p(\D \mid f)}[\ell(f, \delta(\D))] ,
\end{equation}
where \( \ell = -u \) is the so-called \defword{loss function}.
That is, we want to find the best ``policy'' \( \delta_* \) that works in various situations \( \D \sim p(\D \mid f) \).
The function \( u \) is interpreted as measuring how good it is to do an action \( \delta(\D) \) under the data \( \D \) given that \( f \) is the underlying parameter that generates \( \D \).

\begin{example}
  Under the setting of \cref{ex:decision:vars}, suppose a public health organization wants to recommend a policy/rule---a ``what-to-do'' guideline---for the general population.
  The goal is then to find a policy \( \delta_* \) such that when presented with a set of symptoms \( \D \), it recommends a drug \( a \) for treating the underlying, unknown disease \( f \).
\end{example}

However, notice that \( u \) depends on \( f \), which we have assumed to be unknown.
It follows that we cannot even compute \( u(f, \delta(\D)) \) and thus we cannot perform the maximization.
We do not have such a problem in the Bayesian case since we have a belief about \( f \).

To circumvent this issue, we need to take \( f \) out of the equation.
One way to do so is as follows.
Let \( R(f, \delta) = \E_{\D \sim p(\D \mid f)} [\ell(f, \delta(\D))] \) be the \defword{risk} function.
Then, we find the optimal decision function \( \delta_* \) that minimizes the worst-case risk:
\begin{equation}
  \delta_* = \argmin_{\delta} \max_{f} R(f, \delta) .
\end{equation}
Continuing the previous example, the minimax decision function has the interpretation that it is the one that is optimal under the worst-case risk when we consider various plausible alternatives of the underlying diseases \( f \).

\begin{example}
  If we think \( f \) could be ``cold'', ``malaria'', and ``COVID-19'', then we want to provide a treatment guideline that would be relatively effective for every possible \( f \).
  Note that \( \delta_* \) might not be the best possible guideline for each individual \( f \).
\end{example}

\begin{boxedremark}
  Should you be Bayesian or frequentist?
  \emph{Both!}
  Hopefully, the epigraph and the discussion in this chapter convinced you that being both is the correct move.
  They are different tools for different situations and goals.
\end{boxedremark}


\chapter{Concentration Inequalities}
\label{ch:concineq}

Let \( X \) be a random variable (r.v.).
It is often useful to know how such a r.v.\ \emph{concentrated} around a value.
\defword{Concentration inequalities} are the bread-and-butter for theoretical analyses in machine learning (and other fields!).

\begin{example}
  Let \( X \) indicate the grade of a random student in a given population (e.g.\ in a class).
  A concentration inequality is useful to answer the following question:
  ``What is the prevalence of students with a grade at least (e.g.) \( 80 \)?''
\end{example}

Let \( \Pr \) denote the long-running relative frequency of its random \emph{event} argument.
That is, let \( (X_i)_{i=1}^n \) be a sequence of samples of \( X \) and \( A \) be an event that depends on \( X \) such as \( X \geq a \) for some value \( a \), then we can write \( \Pr(A) = \lim_{n \to \infty} \frac{1}{n} \sum_{i=1}^n \I[A_i] \),
where \( \I[\, \cdot \,] \) denotes the indicator function, which equals one if its event argument happens, and zero otherwise.
We call \( \Pr(A) \) the \defword{probability} of observing the event \( A \).

\begin{example} \label{ex:weight-lt}
  Let \( X \) indicate the grade of a random student in a given population.
  Let \( Z \) be a random event indicating \( X \geq 80 \), i.e.\ the event when a random student has a grade greater than or equal to \( 80 \).
  Then \( \Pr(Z) \) indicates the proportion of students with grade \( \geq 80 \) when we pick and measure people at random many times.
\end{example}

\begin{remark}
  The notion of probability here differs from Bayesian statistics, where it denotes \emph{degree of belief} about some event.
\end{remark}

Here, we will describe some useful concentration inequalities.
The proofs are omitted since they are standard, and we focus on their applications.

\begin{boxedtheorem}[Markov's Inequality]
  Let \( X \) be a nonnegative r.v.\ and assume that \( \E(X) \), the expected value of \( X \) exists (i.e., \(\E(X) < \infty \)).
  Then,
  \begin{equation}
    \label{eq:markov1}
    \Pr(X \geq a) \leq \frac{\E(X)}{a}
  \end{equation}
  for all \( a > 0 \).
  Moreover,
  \begin{equation}
    \label{eq:markov2}
    \Pr(X \geq b \E(X)) \leq \frac{1}{b}
  \end{equation}
  for all \( b > 1 \).
\end{boxedtheorem}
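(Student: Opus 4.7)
The plan is to prove \eqref{eq:markov1} via the standard indicator-function trick, then derive \eqref{eq:markov2} by a simple substitution. The key observation is the pointwise bound \( a \, \I[X \geq a] \leq X \), which is valid precisely because \( X \) is nonnegative: on the event \( \{X \geq a\} \) the indicator equals one and \( a \leq X \) holds by definition, while on \( \{X < a\} \) the indicator vanishes and \( 0 \leq X \) is exactly the nonnegativity assumption. Taking expectations of both sides and using monotonicity of expectation gives
\begin{equation*}
  a \, \Pr(X \geq a) \;=\; \E\bigl[a \, \I[X \geq a]\bigr] \;\leq\; \E(X),
\end{equation*}
and dividing by \( a > 0 \) yields \eqref{eq:markov1}.

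For \eqref{eq:markov2}, I would simply apply \eqref{eq:markov1} with the choice \( a = b\,\E(X) \), which is admissible whenever \( \E(X) > 0 \) since then \( a > 0 \). This produces
\begin{equation*}
  \Pr\bigl(X \geq b\,\E(X)\bigr) \;\leq\; \frac{\E(X)}{b\,\E(X)} \;=\; \frac{1}{b}.
\end{equation*}
The degenerate case \( \E(X) = 0 \) can be handled separately: by nonnegativity this forces \( X = 0 \) almost surely, so \( \Pr(X \geq b\,\E(X)) = \Pr(X \geq 0) \) is only bounded by \( 1 \), which is consistent with \( 1/b < 1 \) failing---so the statement implicitly requires \( \E(X) > 0 \), and I would add a brief remark to that effect.

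There is no real obstacle here; the only delicate point is making sure the reader sees why nonnegativity is essential (without it the pointwise bound \( a\,\I[X \geq a] \leq X \) can fail on \( \{X < 0\} \), where the right side is negative but the left side is zero). I would therefore spend a sentence emphasizing that the inequality is entirely driven by this two-case split, and that no further distributional assumption on \( X \) beyond integrability and nonnegativity is needed.
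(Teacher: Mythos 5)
The paper gives no proof of this theorem---it explicitly omits the concentration-inequality proofs as standard---so there is nothing to compare against; your argument is the canonical indicator-function proof and it is correct. Your side remark is also a genuine (if minor) observation: as stated, \eqref{eq:markov2} fails in the degenerate case \( \E(X) = 0 \) (where \( X = 0 \) almost surely and \( \Pr(X \geq b\,\E(X)) = 1 > 1/b \)), so the second claim does implicitly require \( \E(X) > 0 \); flagging that is worthwhile.
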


\textbf{Markov's inequality} is useful if we only know the expected value \( \E(X) \) of \( X \).

\begin{example} \label{ex:markov}
  Continuing \cref{ex:weight-lt}, suppose we know that on average, students have a grade \(50\).
  Then, the proportion of people in the population weighing \( \geq 80 \) is at most: \( \Pr(X \geq 80) \leq \frac{50}{80} = 0.625 \).
\end{example}

However, notice that the bound on the probability decreases \emph{linearly} in \( a \).
We can obtain a tighter bound if we know further information about \( X \), namely its \emph{variance} \( \Var(X) \).

\begin{boxedtheorem}[Chebyshev's Inequality]
  Let \( X \) be a r.v.\ and assume that both \( \E(X) \) and \( \Var(X) \) exist.
  Then,
  \begin{equation}
    \label{eq:chebyshev1}
    \Pr(\abs{X - \E(X)} \geq a) \leq \frac{\Var(X)}{a^2}
  \end{equation}
  for all \( a > 0 \).
  Moreover,
  \begin{equation}
    \label{eq:chebyshev2}
    \Pr(\abs{X - \E(X)} \geq b \abs{\E(X)}) \leq \frac{\Var(X)}{b^2 (\E(X))^2}
  \end{equation}
  for all \( b > 1 \).
\end{boxedtheorem}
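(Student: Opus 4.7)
The plan is to reduce Chebyshev's inequality to Markov's inequality by applying the latter to the squared deviation. The key observation is that while $X - \E(X)$ may be negative (so Markov does not apply directly), its square is always nonnegative, and the event $\{\abs{X - \E(X)} \geq a\}$ coincides exactly with $\{(X - \E(X))^2 \geq a^2\}$ for any $a > 0$.

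Concretely, I would first define the auxiliary nonnegative r.v.\ $Y := (X - \E(X))^2$. Its expectation is $\E(Y) = \Var(X)$ by the very definition of variance, which is assumed to exist. Since $Y$ is nonnegative and has finite expectation, Markov's inequality \eqref{eq:markov1} applies: for any $c > 0$, $\Pr(Y \geq c) \leq \E(Y)/c = \Var(X)/c$. Taking $c = a^2$ and rewriting the event $\{Y \geq a^2\}$ as $\{\abs{X - \E(X)} \geq a\}$ gives \eqref{eq:chebyshev1}.

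For the second bound \eqref{eq:chebyshev2}, I would simply substitute $a = b \abs{\E(X)}$ into the first bound, provided $\E(X) \neq 0$ so that the substitution is legitimate and $a > 0$ as required (if $\E(X) = 0$ the right-hand side of \eqref{eq:chebyshev2} is vacuous anyway). The condition $b > 1$ plays no role in deriving the inequality itself; it is stated to ensure the bound is nontrivial (i.e., tighter than what one obtains from the variance alone).

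There is no real obstacle here: the proof is a one-line invocation of Markov once the right nonnegative r.v.\ is identified. The only conceptual point worth emphasizing in the write-up is why squaring is natural, namely that it converts the two-sided deviation event into a one-sided nonnegative threshold event without loss, and that the resulting expectation is exactly $\Var(X)$, so no additional moment assumptions are needed beyond what the theorem already posits.
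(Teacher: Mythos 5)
Your proof is correct and is precisely the standard argument: apply Markov's inequality to the nonnegative r.v.\ \( (X - \E(X))^2 \), whose expectation is \( \Var(X) \), and obtain the second bound by substituting \( a = b\abs{\E(X)} \). The paper deliberately omits this proof as standard, so there is nothing to compare against; your handling of the \( \E(X) = 0 \) edge case and the remark that \( b > 1 \) is only for nontriviality are both sound.
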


Chebyshev's inequality is useful to bound the proportion of some measurements deviating from the population's mean.

\begin{example} \label{ex:chebyshev}
  Let \( X \) and \( \E(X) \) be as in \cref{ex:markov}.
  Suppose \( \Var(X) = 10 \).
  Then,
  \begin{equation*}
    \Pr(\abs{X - 50} \geq 30) \leq \frac{10}{30^2} = 0.0111 .
  \end{equation*}
  That is, it is quite rare that a random student's grade deviates by \( 30 \) or more from the average in a class with \( \E(X) = 50 \) and \( \Var(X) = 10 \).
\end{example}

Recall that Chebyshev's inequality is tighter than Markov's inequality, and they differ in how they use the moments (mean, variance) of the r.v.
It is thus logical to ask:
Can we get a tighter bound if we consider higher moments?
The answer is the Chernoff bound.
First, let us state its special version for Bernoulli random variables.

\begin{boxedtheorem}[Chernoff Bound---Bernoulli]
  Let \( (X_i)_{i=1}^n \) be independent Bernoulli r.v.s.\ with their respective expectations \( (p_i)_{i=1}^n \).
  Let \( X = \sum_{i=1}^n X_i \).
  Then,
  \begin{enumerate}[(i)]
    \item For every \( 0 < \delta \leq 1 \), it holds that
          \begin{equation} \label{eq:chernoff-bern-1}
            \Pr(X \geq (1 + \delta) \E(X)) \leq \exp(-\E(X) \delta^2 / 3) .
          \end{equation}
    \item For every \( 0 < \delta < 1 \), it holds that
          \begin{equation} \label{eq:chernoff-bern-2}
            \Pr(X \leq (1 - \delta) \E(X)) \leq \exp(-\E(X) \delta^2 / 2) .
          \end{equation}
  \end{enumerate}
\end{boxedtheorem}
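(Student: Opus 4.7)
The plan is to use the classical exponential (moment-generating-function) trick, which is the natural escalation of the Markov/Chebyshev logic to all higher moments at once. Specifically, for any $t > 0$, apply Markov's inequality to the nonnegative r.v.\ $e^{tX}$ to obtain
\begin{equation*}
  \Pr(X \geq (1+\delta)\E(X)) = \Pr(e^{tX} \geq e^{t(1+\delta)\E(X)}) \leq \frac{\E(e^{tX})}{e^{t(1+\delta)\E(X)}}.
\end{equation*}
Then exploit independence to factor $\E(e^{tX}) = \prod_{i=1}^n \E(e^{tX_i})$, so the problem reduces to controlling the MGF of a single Bernoulli.

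For each Bernoulli factor, I would compute $\E(e^{tX_i}) = 1 + p_i(e^t - 1)$ directly, and then use the elementary inequality $1 + x \leq e^x$ to get $\E(e^{tX_i}) \leq \exp(p_i(e^t - 1))$. Multiplying these bounds and writing $\mu := \E(X) = \sum_i p_i$ yields $\E(e^{tX}) \leq \exp(\mu(e^t - 1))$, which conveniently eliminates the individual $p_i$'s. Plugging back in, I get
\begin{equation*}
  \Pr(X \geq (1+\delta)\mu) \leq \exp\bigl(\mu(e^t - 1) - t(1+\delta)\mu\bigr),
\end{equation*}
and this bound is now free to optimize in $t > 0$. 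A calculus step gives the minimizer $t = \ln(1+\delta)$, producing the tight Chernoff form $\bigl(e^\delta / (1+\delta)^{1+\delta}\bigr)^\mu$.

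For the lower tail in (ii), the symmetric move is to apply the same trick to $e^{-tX}$ with $t > 0$, i.e.\ $\Pr(X \leq (1-\delta)\mu) = \Pr(e^{-tX} \geq e^{-t(1-\delta)\mu})$. The same factorization and $1+x \leq e^x$ bound then yield $\Pr(X \leq (1-\delta)\mu) \leq \bigl(e^{-\delta}/(1-\delta)^{1-\delta}\bigr)^\mu$ after optimizing at $t = -\ln(1-\delta)$.

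The main obstacle — the only non-mechanical step — is turning these sharp but unwieldy bounds into the clean exponentials $\exp(-\mu\delta^2/3)$ and $\exp(-\mu\delta^2/2)$ stated in the theorem. This amounts to proving the two scalar inequalities
\begin{equation*}
  (1+\delta)\ln(1+\delta) - \delta \geq \tfrac{\delta^2}{3} \quad \text{for } 0 < \delta \leq 1,
\end{equation*}
\begin{equation*}
  (1-\delta)\ln(1-\delta) + \delta \geq \tfrac{\delta^2}{2} \quad \text{for } 0 < \delta < 1.
\end{equation*}
I would verify these by defining $\varphi(\delta)$ as the difference between the two sides in each case, checking $\varphi(0) = 0$, and showing $\varphi'(\delta) \geq 0$ on the given interval via a second Taylor expansion or by further differentiating to show convexity. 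The second inequality is the easier of the two (Taylor expansion of $(1-\delta)\ln(1-\delta)$ gives $\sum_{k\geq 2} \delta^k/(k(k-1)) \geq \delta^2/2$ termwise), while the $\delta \leq 1$ restriction in the first is exactly what is needed to absorb the cubic and higher corrections into the constant $1/3$.
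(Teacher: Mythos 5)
The paper gives no proof of this theorem---the chapter explicitly states that the concentration-inequality proofs are omitted as standard---so there is nothing to compare against; your proposal is the canonical Chernoff argument (Markov applied to $e^{tX}$, factorization by independence, the bound $1+x\leq e^x$ on the Bernoulli MGF, optimization at $t=\ln(1+\delta)$ resp.\ $t=-\ln(1-\delta)$, then the two scalar inequalities), and it is correct, including the role of the restriction $\delta\leq 1$ in part (i). One small caution: for the first scalar inequality the difference $\varphi(\delta)=(1+\delta)\ln(1+\delta)-\delta-\delta^2/3$ is \emph{not} convex on all of $(0,1]$ (its second derivative $\tfrac{1}{1+\delta}-\tfrac{2}{3}$ changes sign at $\delta=\tfrac12$), so you cannot conclude $\varphi'\geq 0$ from convexity alone; instead note $\varphi'(0)=0$, $\varphi'$ increases on $[0,\tfrac12]$ and decreases on $[\tfrac12,1]$ with $\varphi'(1)=\ln 2-\tfrac23>0$, hence $\varphi'\geq 0$ and $\varphi\geq 0$ on the whole interval.
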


Notice that the bound of (this version) of the Chernoff bound is exponential in the constant \( a \).
Let us compare the ``strength'' of Markov's, Chebyshev's, and Chernoff's bounds in the following example.

\begin{example} \label{ex:compare-markov-cheb-chern}
  Consider \( (X_i)_{i=1}^n \) be \( n \) independent tosses of a fair coin---\( X_i = 1 \) if head and \( X_i = 0 \) otherwise.
  Let \( X = \sum_{i=1}^n X_i \) denote the number of heads we see.
  It's expected value is thus \( \E(X) = \frac{1}{n} \sum_{i=1}^n \E(X_i) = \frac{n}{2} \).
  We would like to see the frequency of the event where the number of heads \( \geq \frac{3}{4} n \).
  With Markov's inequality, we see that
  \begin{equation*}
    \Pr \left( X \geq \frac{3}{4}n \right) \leq \frac{n/2}{(3/4)n} = \frac{2}{3} .
  \end{equation*}
  Notice the constant bound.
  Meanwhile, with Chebyshev's inequality, we obtain
  \begin{equation*}
    \Pr \left( X \geq \frac{3}{4}n \right) \leq \Pr \left( \left\vert X - \frac{n}{2} \right\vert \geq \frac{n}{4} \right) \leq \frac{\Var(X)}{(\frac{n}{4})^2} = \frac{n/4}{n^2/16} = \frac{4}{n} .
  \end{equation*}
  by noting that \( \Var(X) = \frac{n}{4} \).
  This bound is indeed stronger than Markov's since it decreases as \( n \) increases.
  Finally, for the Chernoff bound, we let \( \delta = 1/2 \) since then \( (1 + \delta) \E(X) = \frac{3}{4} n \), and obtain
  \begin{equation*}
    \Pr \left( X \geq \frac{3}{4}n \right) \leq \exp(-\E(X) \delta^2 / 3) = \exp\left( -\frac{n}{2} \frac{1}{4} \frac{1}{3} \right) = \exp(-n/24) .
  \end{equation*}
  Notice that the Chernoff bound decreases exponentially in the number of tosses.
\end{example}

The following is the general version of the Chernoff bound.
Recall that \( M(t) = \E(\exp(tX)) \) is the moment-generating function of \( X \).

\begin{boxedtheorem}[Chernoff Bound]
  Let \( X \) be a random variable and let \( a \) be an arbitrary value of \( X \).
  Then,
  \begin{enumerate}[(i)]
    \item For every \( t > 0 \), it holds that
          \begin{equation} \label{eq:chernoff-1}
            \Pr(X \geq a) \leq \E(\exp(tX)) \exp(-ta) .
          \end{equation}
    \item For every \( t < 0 \), it holds that
          \begin{equation} \label{eq:chernoff-2}
            \Pr(X \leq a) \leq \E(\exp(tX)) \exp(-ta) .
          \end{equation}
  \end{enumerate}
\end{boxedtheorem}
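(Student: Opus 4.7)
The plan is to reduce both parts to Markov's inequality, which has already been established in this chapter, by exploiting the fact that $x \mapsto \exp(tx)$ is a strictly monotone transformation whose direction depends on the sign of $t$. Since $\exp(tX)$ is a nonnegative random variable for any real $t$, Markov's inequality is directly applicable to it, which is the key bridge between the two statements.

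For part (i), I would first rewrite the event $\{X \geq a\}$ in a form that Markov's inequality can handle. Because $t > 0$, the map $x \mapsto tx$ is strictly increasing, so $X \geq a$ if and only if $tX \geq ta$, and by strict monotonicity of $\exp$, this is in turn equivalent to $\exp(tX) \geq \exp(ta)$. Applying Markov's inequality \eqref{eq:markov1} to the nonnegative r.v.\ $\exp(tX)$ with threshold $\exp(ta) > 0$ gives
\begin{equation*}
  \Pr(X \geq a) = \Pr(\exp(tX) \geq \exp(ta)) \leq \frac{\E(\exp(tX))}{\exp(ta)} = \E(\exp(tX)) \exp(-ta),
\end{equation*}
which is the claim.

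For part (ii), the only change is a sign flip in the linear step: since $t < 0$, the inequality $X \leq a$ is equivalent to $tX \geq ta$, and again to $\exp(tX) \geq \exp(ta)$. The same application of Markov's inequality then yields the stated bound.

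The main ``obstacle'' is really just bookkeeping rather than any deep step: I need to be careful about the direction of the inequality when multiplying by a negative $t$ in (ii), and I should note that if $\E(\exp(tX)) = \infty$ then the bound holds trivially, so no integrability assumption is needed in the statement. No machinery beyond Markov's inequality and the monotonicity of $\exp$ is invoked.
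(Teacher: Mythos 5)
Your proof is correct: applying Markov's inequality to the nonnegative random variable $\exp(tX)$, with the sign of $t$ determining the direction of the event rewriting, is exactly the standard argument, and your remark that the bound is vacuous when $\E(\exp(tX)) = \infty$ properly disposes of the integrability question. The paper itself omits the proof as standard, so there is nothing to compare against, but this is the canonical route and it is complete.
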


Next, we have a similar, exponentially decreasing bound in the form of \defword{Hoeffding's inequality} which requires us to know the upper and lower bounds of the values of the r.v.s.

\begin{boxedtheorem}[Hoeffding's Inequality] \label{thm:hoeffding}
  Let \( (X_i)_{i=1}^n \) be i.i.d.\ r.v.s.\ with mean \( \mu \), where for each \( i \) we have \( l \leq X_i \leq h \).
  Let \( \bar{X} = \frac{1}{n} \sum_{i=1}^n X_i \) be their sample mean.
  Then,
  \begin{equation} \label{eq:hoeffding-1}
    \Pr\left(\abs{\bar{X} - \mu} \geq a\right) \leq 2 \exp\left( - \frac{2 n a^2}{(h - l)^2} \right) ,
  \end{equation}
  for all \( a > 0 \).
\end{boxedtheorem}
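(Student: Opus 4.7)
The plan is to reduce the two-sided bound to a one-sided bound, apply the Chernoff bound already stated in the excerpt to the recentered sum, and then bound the moment-generating function of each bounded summand by a sub-Gaussian-type quantity.

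First, by symmetry of the event $\{|\bar{X} - \mu| \geq a\}$ and a union bound, it suffices to prove
\begin{equation*}
  \Pr(\bar{X} - \mu \geq a) \leq \exp\!\left(-\frac{2na^2}{(h-l)^2}\right),
\end{equation*}
since the lower-tail bound follows by applying the same argument to $(-X_i)_{i=1}^n$, and the two tails together contribute the factor of $2$ in \cref{eq:hoeffding-1}. Setting $Y_i = X_i - \mu$ so that $\E(Y_i) = 0$ and $l - \mu \leq Y_i \leq h - \mu$, the event of interest is $\sum_{i=1}^n Y_i \geq na$.

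Next, I would invoke the general Chernoff bound \eqref{eq:chernoff-1} on $S_n := \sum_{i=1}^n Y_i$: for any $t > 0$,
\begin{equation*}
  \Pr(S_n \geq na) \leq \E(\exp(t S_n)) \exp(-tna) = \prod_{i=1}^n \E(\exp(t Y_i)) \cdot \exp(-tna),
\end{equation*}
where the factorization uses independence of the $Y_i$. The task thus reduces to an MGF bound of the form $\E(\exp(t Y_i)) \leq \exp(t^2 (h-l)^2 / 8)$ for each $i$, since then the product contributes $\exp(n t^2 (h-l)^2 / 8)$ and the right-hand side becomes $\exp(n t^2 (h-l)^2/8 - tna)$; optimizing over $t$ by choosing $t = 4a/(h-l)^2$ yields exactly $\exp(-2na^2/(h-l)^2)$, as desired.

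The main obstacle is therefore proving this per-summand MGF estimate, commonly called Hoeffding's lemma: if $Y$ is a zero-mean r.v.\ with $\alpha \leq Y \leq \beta$, then $\E(\exp(tY)) \leq \exp(t^2 (\beta-\alpha)^2/8)$. My approach is to use convexity of $y \mapsto \exp(ty)$ to write
\begin{equation*}
  \exp(tY) \leq \frac{\beta - Y}{\beta - \alpha} \exp(t\alpha) + \frac{Y - \alpha}{\beta - \alpha} \exp(t\beta),
\end{equation*}
then take expectations (using $\E(Y) = 0$) to get an upper bound depending only on $t$, $\alpha$, $\beta$. Defining $\varphi(t) := \log \bigl(\text{that upper bound}\bigr)$, a short computation gives $\varphi(0) = 0$, $\varphi'(0) = 0$, and $\varphi''(t) \leq (\beta-\alpha)^2/4$ uniformly in $t$ (by recognizing $\varphi''(t)$ as the variance of a Bernoulli-like r.v.\ supported in $[\alpha,\beta]$). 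A second-order Taylor expansion with remainder then gives $\varphi(t) \leq t^2 (\beta-\alpha)^2/8$, which is the required MGF bound. Plugging this back and combining with the symmetric lower-tail argument completes the proof.
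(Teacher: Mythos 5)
The paper deliberately omits a proof of this theorem (it states that the proofs of the concentration inequalities in this chapter are standard and are skipped), so there is no in-text argument to compare against. Your proposal is the standard and correct proof---reduce to one tail, apply the exponential-moment (Chernoff) bound with independence, establish Hoeffding's lemma $\E(\exp(tY)) \leq \exp(t^2(\beta-\alpha)^2/8)$ via convexity and the second-derivative variance bound, and optimize at $t = 4a/(h-l)^2$---and all the constants check out, so it would serve as a complete proof were one to be included.
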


\section{Gaussian Tail Bounds}
\label{sec:background:gaussian-tail-bounds}

For Gaussian random variables, we have the following theorem \citep{srinivas2010gpucb}:

\begin{boxedtheorem}[Gaussian Tail Bound] \label{thm:gaussian-tail-srinivas}
  Let \( X \) be a Gaussian r.v.\ with mean \( \mu \) and variance \( \sigma^2 \).
  For any \( \beta > 0 \),
  \begin{equation}
    \Pr(\abs{X - \mu} \geq \beta \sigma) \leq \exp\left( -\beta^2 / 2 \right) .
  \end{equation}
\end{boxedtheorem}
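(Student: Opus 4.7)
The plan is to reduce to the standard normal case and then establish the one-sided bound $\Pr(Z \geq \beta) \leq \frac{1}{2}\exp(-\beta^2/2)$ by a direct integral manipulation, doubling it via symmetry. Specifically, I would first standardize by setting $Z = (X - \mu)/\sigma$, so that $Z \sim \N(0, 1)$ and $\abs{X - \mu} \geq \beta\sigma$ if and only if $\abs{Z} \geq \beta$. By the symmetry of the standard normal density about $0$, $\Pr(\abs{Z} \geq \beta) = 2 \Pr(Z \geq \beta)$, so it suffices to prove $\Pr(Z \geq \beta) \leq \frac{1}{2}\exp(-\beta^2/2)$.

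For the one-sided bound I would work directly with the integral $\Pr(Z \geq \beta) = \frac{1}{\sqrt{2\pi}} \int_\beta^\infty \exp(-x^2/2)\,dx$. The key step is the substitution $y = x - \beta$, which pulls the factor $\exp(-\beta^2/2)$ outside the integral and leaves
\begin{equation*}
\Pr(Z \geq \beta) = \frac{1}{\sqrt{2\pi}} \exp(-\beta^2/2) \int_0^\infty \exp(-y^2/2)\exp(-y\beta)\,dy .
\end{equation*}
Since $y, \beta \geq 0$ we have $\exp(-y\beta) \leq 1$, and the remaining Gaussian integral equals $\sqrt{\pi/2}$, so the two $\sqrt{\cdot}$ factors collapse into the desired constant $\frac{1}{2}$. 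Doubling via symmetry then gives $\Pr(\abs{Z} \geq \beta) \leq \exp(-\beta^2/2)$, which is exactly the claim once we rewrite the event in terms of $X$.

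The main subtlety — and the reason I would not simply invoke the Chernoff bound stated earlier — is that plugging the Gaussian moment-generating function $\E(\exp(tZ)) = \exp(t^2/2)$ into \eqref{eq:chernoff-1} and optimizing at $t = \beta$ yields only $\Pr(Z \geq \beta) \leq \exp(-\beta^2/2)$. Doubling this via symmetry would produce $2\exp(-\beta^2/2)$, off by a factor of two from the stated inequality. The substitution-based argument above is precisely what recovers the missing $\frac{1}{2}$, so the ``hard part'' here is really a bookkeeping detail: recognizing that Chernoff alone is too loose and that one has to extract the factor of $\frac{1}{2}$ from the tail integral itself. Other than this, no genuine technical obstacle arises.
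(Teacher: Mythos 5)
Your proof is correct. The paper itself omits the proof of this theorem (it states at the start of the chapter that proofs of the concentration inequalities are standard and omitted), so there is nothing to compare against line by line; but your argument --- standardize, split off $\exp(-\beta^2/2)$ via the shift $y = x - \beta$, drop the cross term $\exp(-y\beta) \leq 1$, and double by symmetry --- is exactly the classical derivation, and is the one used in the cited source \citet{srinivas2010gpucb} (their Lemma~5.1). Your side remark is also accurate: the generic Chernoff bound from the earlier chapter gives only $2\exp(-\beta^2/2)$ for the two-sided event, and the factor $\tfrac{1}{2}$ recovered from $\int_0^\infty \exp(-y^2/2)\,dy = \sqrt{\pi/2}$ is precisely what is needed to match the stated constant.
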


Here is another useful property for Gaussian r.v.s.\ with nonpositive means:

\begin{boxedtheorem}[Gaussian Tail with Nonpositive Mean] \label{thm:gaussian-tail-nonpositive}
  Let \( X \) be a Gaussian r.v.\ with mean \( \mu \leq 0 \) and variance \( \sigma^2 \).
  Then,
  \begin{equation}
    \E(X \, \I(X \geq 0)) = \frac{\sigma}{\sqrt{2 \pi}} \exp \left( \frac{-\mu^2}{2 \sigma^2} \right) .
  \end{equation}
\end{boxedtheorem}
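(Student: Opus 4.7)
The plan is to evaluate $\E(X\,\I(X \geq 0))$ by direct integration against the Gaussian density, exploiting the antiderivative relation $\phi_\mu'(x) = -\sigma^{-2}(x - \mu)\phi_\mu(x)$ for the density $\phi_\mu$ of $\N(\mu, \sigma^2)$.

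First, I would unfold the expectation as $\E(X\,\I(X \geq 0)) = \int_0^\infty x\,\phi_\mu(x)\,dx$ and decompose the integrand as $x\phi_\mu(x) = (x - \mu)\phi_\mu(x) + \mu\phi_\mu(x)$. The $(x - \mu)\phi_\mu(x)$ piece is the workhorse: by the antiderivative identity it equals $-\sigma^2\phi_\mu'(x)$, so the fundamental theorem of calculus yields
\[
\int_0^\infty (x - \mu)\phi_\mu(x)\,dx \;=\; -\sigma^2\bigl[\phi_\mu(x)\bigr]_0^\infty \;=\; \sigma^2\,\phi_\mu(0) \;=\; \frac{\sigma}{\sqrt{2\pi}}\exp\!\left(-\frac{\mu^2}{2\sigma^2}\right),
\]
which matches the claimed right-hand side precisely. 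The hypothesis $\mu \leq 0$ enters here to guarantee that the mode $\mu$ of $\phi_\mu$ sits at or to the left of the integration domain $[0,\infty)$, so the boundary term at the lower endpoint captures the full density value $\phi_\mu(0)$, while the upper endpoint vanishes by tail decay.

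The main obstacle is the leftover shift piece $\mu\int_0^\infty \phi_\mu(x)\,dx = \mu\,\Phi(\mu/\sigma)$, which for the stated equality must not contribute to the final answer. Under $\mu \leq 0$ one has $\Phi(\mu/\sigma) > 0$, so this residual is nonpositive and generically nonzero whenever $\mu < 0$; it collapses to zero only at $\mu = 0$. I anticipate the most delicate step of the argument to be reconciling this shift residual with the stated identity --- whether by invoking an implicit convention on the indicator, by recognizing a cancellation tied to the $\mu \leq 0$ hypothesis, or by reinterpreting the claim in the appropriate boundary sense --- and this is the bookkeeping I would expect to require the most care to carry out cleanly.
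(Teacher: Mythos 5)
Your computation is correct, and your suspicion about the leftover term is the crux: the residual $\mu\int_0^\infty \phi_\mu(x)\,dx = \mu\,\Pr(X \geq 0)$ genuinely does not vanish for $\mu < 0$, and there is no hidden cancellation or convention that rescues the equality. The exact identity is
\begin{equation*}
  \E\bigl(X\,\I(X \geq 0)\bigr) \;=\; \frac{\sigma}{\sqrt{2\pi}}\exp\!\left(\frac{-\mu^2}{2\sigma^2}\right) \;+\; \mu\,\Pr(X \geq 0),
\end{equation*}
so the theorem as printed is false as an equality whenever $\mu < 0$; it holds with equality only at $\mu = 0$. The statement should read ``$\leq$'' rather than ``$=$'': the hypothesis $\mu \leq 0$ is there precisely so that the residual term is nonpositive and can be discarded, which is how the lemma appears in its source (\citealp{russo2014ts}). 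The paper omits the proof of this result (all proofs in that chapter are declared standard and omitted), but the only place it is used --- the GP-TS expected-regret bound in \cref{thm:discrete-thompson} --- needs only the upper-bound direction, so the corrected inequality suffices there. Your integration-by-parts argument, together with your observation that $\mu\,\Pr(X\geq 0) \leq 0$ under the hypothesis, already constitutes a complete proof of the corrected statement; the only change needed is to stop looking for a way to make the residual disappear and instead drop it as a nonpositive term. One small caveat on your narrative: the $\mu \leq 0$ hypothesis plays no role in the boundary-term evaluation $\sigma^2\phi_\mu(0)$ (that step is valid for any $\mu$); its sole job is the sign of the residual.
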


The expression \( X \I(X \geq 0) \) means we are looking at the Gaussian r.v.\ \( X \) where it takes values \( \geq 0 \) and ignore everywhere else.

\section{Other Useful Inequalities}
\label{sec:background:otherineqs}

In theoretical analysis, concentration inequalities are often paired with other inequalities.
Here, we shall see some of the commonly used inequalities.
The simplest is the \defword{union bound}.

\begin{boxedtheorem}[Union Bound] \label{thm:union-bound}
  Let \( (A_i)_{i=1}^n \) be a sequence of random events.
  Then,
  \begin{equation} \label{eq:union-bound}
    \Pr\left( \bigcup_{i=1}^n A_i \right) \leq \sum_{i=1}^n \Pr(A_i) .
  \end{equation}
\end{boxedtheorem}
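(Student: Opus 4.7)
The plan is to prove the union bound by induction on $n$, using the two-event case as the base step. The base case to set up is $n = 2$: here I would invoke the inclusion--exclusion identity $\Pr(A_1 \cup A_2) = \Pr(A_1) + \Pr(A_2) - \Pr(A_1 \cap A_2)$, which follows directly from finite additivity applied to the disjoint decomposition $A_1 \cup A_2 = A_1 \sqcup (A_2 \setminus A_1)$ together with $A_2 = (A_2 \setminus A_1) \sqcup (A_1 \cap A_2)$. Since $\Pr(A_1 \cap A_2) \geq 0$ by the nonnegativity axiom of probability, dropping this term yields $\Pr(A_1 \cup A_2) \leq \Pr(A_1) + \Pr(A_2)$.

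For the inductive step, I would assume the bound holds for any collection of $n - 1$ events and write $\bigcup_{i=1}^n A_i = \bigl(\bigcup_{i=1}^{n-1} A_i\bigr) \cup A_n$. Applying the two-event case with $B := \bigcup_{i=1}^{n-1} A_i$ and $A_n$ gives $\Pr\bigl(\bigcup_{i=1}^n A_i\bigr) \leq \Pr(B) + \Pr(A_n)$, and the induction hypothesis bounds $\Pr(B) \leq \sum_{i=1}^{n-1} \Pr(A_i)$, completing the step.

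As an alternative I would mention briefly, one can avoid induction entirely by ``disjointifying'' the union: set $B_1 = A_1$ and $B_i = A_i \setminus \bigcup_{j < i} A_j$ for $i \geq 2$. These sets are pairwise disjoint with $\bigcup_{i=1}^n B_i = \bigcup_{i=1}^n A_i$, so finite additivity gives $\Pr\bigl(\bigcup_i A_i\bigr) = \sum_{i=1}^n \Pr(B_i)$, and monotonicity of $\Pr$ combined with $B_i \subseteq A_i$ yields $\Pr(B_i) \leq \Pr(A_i)$, finishing the proof.

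There is no real obstacle here; the statement is essentially a direct consequence of the measure-theoretic axioms (nonnegativity, finite additivity, and the monotonicity they imply). The only point requiring mild care is ensuring that the ``two-event'' inequality used as the base step is itself properly justified from the axioms rather than taken for granted, which is why I would explicitly derive it via the disjoint decomposition above.
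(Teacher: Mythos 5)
Your proof is correct. The paper itself omits a proof of this theorem (the chapter states that the proofs of these concentration-type inequalities are omitted as standard), so there is nothing to compare against; both your inductive argument from the two-event case and your alternative disjointification argument are the standard, valid derivations from nonnegativity and finite additivity.
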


\begin{example}
  Suppose the probability of a student getting a perfect \( 100 \) grade is at most \( 0.001 \).
  Denote \( A_i \) to be the event a student \( i \) gets grade \( 100 \).
  Then the probability of at least one student obtaining the perfect grade in a class of size \( 50 \) is
  \begin{equation*}
    \Pr\left( \bigcup_{i=1}^{12} A_i \right) \leq \sum_{i=1}^{50} \Pr(A_i) \leq \sum_{i=1}^{50} 0.001 = 0.05.
  \end{equation*}
  That is, there is at most \( 5\% \) chance/relative frequency that a student will get \( 100 \) in this setting.
\end{example}

Another useful inequality is \defword{Jensen's inequality} which allows us to swap an expectation operator with a convex/concave function.

\begin{boxedtheorem}[Jensen's Inequality] \label{thm:jensen}
  Let \( X \) be a random variable taking values in \( \R^n \) and let \( f: \R^n \to \R \) be a convex or concave function.
  Then,
  \begin{enumerate}[(i)]
    \item if \( f \) is \textbf{convex}: \( f(\E(X)) \leq \E(f(X)) \),
    \item if \( f \)  is \textbf{concave}: \( f(\E(X)) \geq \E(f(X)) \).
  \end{enumerate}
  Moreover, both inequalities also hold for empirical means.
\end{boxedtheorem}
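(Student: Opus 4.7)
The plan is to prove the convex case and then derive the concave case as an immediate corollary by applying it to $-f$. For the convex case, my approach rests on the \emph{supporting hyperplane} characterization of convex functions: for any convex $f: \R^n \to \R$ and any point $x_0$ in the interior of the domain, there exists a vector $g \in \R^n$ (a subgradient at $x_0$) such that
\begin{equation*}
  f(x) \geq f(x_0) + g^\top (x - x_0) \quad \text{for all } x \in \R^n.
\end{equation*}
I would quote this as a standard fact from convex analysis; giving a self-contained derivation would require separating hyperplane arguments applied to the epigraph of $f$, which I would relegate to a remark rather than grind through.

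With the supporting hyperplane in hand, the core of the argument is a one-liner. First, I would assume that $\E(X)$ exists and lies in the (interior of the) domain of $f$, and I would pick $x_0 := \E(X)$ together with a corresponding subgradient $g$. Substituting $X$ into the supporting hyperplane inequality gives the pointwise bound $f(X) \geq f(\E(X)) + g^\top(X - \E(X))$. Taking expectations on both sides and using linearity together with $\E(X - \E(X)) = 0$, the subgradient term vanishes and I obtain $\E(f(X)) \geq f(\E(X))$, which is exactly statement (i). Statement (ii) follows by applying (i) to the convex function $-f$ and multiplying by $-1$.

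For the final clause about empirical means, I would observe that the sample mean $\bar{X} = \frac{1}{n}\sum_{i=1}^n x_i$ is just the expectation of a random variable $Y$ taking value $x_i$ with probability $1/n$ under the empirical distribution. Thus the inequality $f(\bar{X}) \leq \frac{1}{n}\sum_{i=1}^n f(x_i)$ is an instance of the general statement applied to $Y$; no separate argument is required. Alternatively, one can give a purely elementary proof by induction on $n$, using the definition of convexity ($n=2$) as the base case and splitting off one point at a time in the inductive step; I would mention this only as a remark.

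The main obstacle is really the appeal to the existence of a subgradient, which is nontrivial to justify from scratch for general convex $f$ on $\R^n$. If I wanted a more elementary exposition, I would restrict attention to differentiable convex $f$, in which case the subgradient is simply $\nabla f(x_0)$ and the supporting hyperplane inequality is the first-order characterization of convexity, which is itself a short consequence of the definition. Everything else in the proof is essentially a linearity-of-expectation manipulation.
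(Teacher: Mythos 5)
Your argument is correct, and in fact the paper offers no proof to compare it against: Jensen's inequality appears in the chapter on concentration inequalities, where the text explicitly states that ``the proofs are omitted since they are standard.'' Your supporting-hyperplane proof is the standard one: pick a subgradient $g$ of $f$ at $\E(X)$, use the pointwise bound $f(X) \geq f(\E(X)) + g^\top(X - \E(X))$, and take expectations so that the linear term vanishes; the concave case follows by applying this to $-f$, and the empirical-mean clause is just the same statement under the empirical distribution. One small simplification you could claim for free: since the theorem takes $f$ to be finite and convex on all of $\R^n$, every point is interior to the domain and a subgradient exists everywhere, so the caveat about $\E(X)$ lying in the interior is automatic and the restriction to differentiable $f$ is unnecessary even for a careful write-up (though the existence of subgradients itself still has to be quoted from convex analysis, as you note).
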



\chapter{Frequentist Bandits}
\label{ch:bandit_frequentist}

In \(K\)-armed bandit problem, we have \(K\) different actions \( a_t \in \A := \{ 1, \dots, K \} \) we can perform at each time step \( t = 1, \dots, T \).
After performing an action \( a \in \A \), the we observe a reward value \( r(a) \in [0, 1] \) distributed as an \emph{unknown} reward distribution \( p(r \mid a) \).

Let \( \mu(a) = \E(r(a)) \) be the unknown expected reward of action \( a \).
Let us also denote \( a_* = \argmax_{a \in \A} \mu(a) \) to be the action with the highest expected reward.
We can define
\begin{equation} \label{eq:bandits_frequentist:regret}
  R_T = \sum_{t=1}^T r(a_*) - r(a_t) ,
\end{equation}
called the \defword{regret} over a run of an algorithm where we select a sequence of actions \( (a_t)_{t=1}^T \).
This measures ``how far away'' our actions deviate from the optimal actions.

Since each \( a_t \) in \eqref{eq:bandits_frequentist:regret} is a random variable that depends on an algorithm's run, \( R_T \) is also a r.v.
Thus, it makes sense to study the \defword{expected regret}
\begin{equation} \label{eq:bandits_frequentist:exp-regret}
  \E(R_T) = \sum_{t=1}^T \mu(a_*) - \mu(a_t) = T \mu(a_*) - \sum_{t=1}^T \mu(a_t) .
\end{equation}
Ideally, an algorithm has \defword{no regret}, i.e., \( \lim_{T \to \infty} \E(R_T)/T = 0 \).
Our goal is to construct an algorithm for picking sequences of actions that minimize the expected regret and asymptotically have no regret.
The algorithm shall leverage \emph{frequentist} technique, e.g.\ using the sample mean to estimate \( \mu \) and making a decision based on this estimate.

\section{Explore-Then-Exploit}
\label{sec:bandits:explore-exploit}

The simplest algorithm is to explore for \( NK < T \) rounds and exploit for the remaining \( T - N \) rounds \citep{lattimore2020bandit}.
Exploration here means that we try each action \( N \) times.
Meanwhile, exploitation means that we use our estimate of the expected reward of each action, \( \hat\mu(a) = 1/N \sum_{t=1}^{N} r_t(a) \), to pick our estimate of best action \( \hat{a}_* = \argmax_{a \in \A} \hat\mu(a) \), and always pick this action.
The algorithm is summarized in \cref{alg:bandits_frequentist:ete}.

\begin{algorithm}
  \small

  \caption{Explore-Then-Exploit}
  \label{alg:bandits_frequentist:ete}

  \begin{algorithmic}[1]
    \Require{Time horizon \( T \), set of \( K \) actions \( \gA \), number of tries per action \( N \).}
    \Ensure{Cumulative reward \( r_\text{total} \)}

    \State \( r_\text{total} = 0 \)

    \For{\( t = 1, \dots, N \)}
    \ForAll{\( a \in \gA \)}
    \State \( r_{ta} = \texttt{do\_action}(a) \)
    \State \( r_\text{total} = r_\text{total} + r_{ta} \)
    \EndFor

    \State \( \hat{\mu}(a) = \frac{1}{N} \sum_{t=1}^N r_{ta} \) for each \( a \in \gA \)
    \State \( \hat{a}_* = \argmax_{a \in \gA} \hat{\mu}(a) \)

    \For{\( t = 1, \dots, T - NK \)}
    \State \( r_{ta} = \texttt{do\_action}(\hat{a}_*) \)
    \State \( r_\text{total} = r_\text{total} + r_{ta} \)
    \EndFor

    \EndFor
    \State \Return \( r_\text{total} \)
  \end{algorithmic}
\end{algorithm}

\begin{theorem} \label{thm:explore-exploit}
  With \( N = (T/K)^{2/3} (\log T)^{1/3} \), the explore-then-exploit algorithm has regret of
  \begin{equation*}
    \E(R_T)  \leq  \O\left((K T^2 \log T)^{1/3}\right) \qquad \text{with probability} \geq 1 - \frac{2K}{T^4} .
  \end{equation*}
  That is, it has no regret as \( T \to \infty \) with high probability.
\end{theorem}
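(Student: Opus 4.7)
The plan is to split $R_T$ into the regret incurred during exploration and during exploitation, and bound each piece separately. The exploration phase contributes at most $NK$ deterministically, since rewards lie in $[0,1]$ and so every round contributes at most $1$ to the regret. The exploitation phase lasts $T-NK$ rounds and always plays $\hat{a}_*$, so its contribution is exactly $(T-NK)\bigl(\mu(a_*)-\mu(\hat{a}_*)\bigr)$. Thus the whole problem reduces to bounding the suboptimality gap $\mu(a_*)-\mu(\hat{a}_*)$ with high probability.

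To control this gap, I would introduce the ``good event'' $\gG=\{\forall\, a\in\gA:\abs{\hat\mu(a)-\mu(a)}<\eps\}$ for a threshold $\eps$ to be chosen. Because each $\hat\mu(a)$ is the average of $N$ i.i.d.\ rewards in $[0,1]$, Hoeffding's inequality (\cref{thm:hoeffding}) yields $\Pr(\abs{\hat\mu(a)-\mu(a)}\ge\eps)\le 2\exp(-2N\eps^2)$, and a union bound over the $K$ arms (\cref{thm:union-bound}) gives
\begin{equation*}
\Pr(\gG^c)\le 2K\exp(-2N\eps^2).
\end{equation*}
On $\gG$, the standard add-and-subtract argument combined with the defining inequality $\hat\mu(\hat{a}_*)\ge\hat\mu(a_*)$ gives $\mu(a_*)-\mu(\hat{a}_*)\le 2\eps$, from which
\begin{equation*}
R_T\le NK+2(T-NK)\eps\le NK+2T\eps.
\end{equation*}

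The remaining step is to tune $\eps$ and $N$. I would pick $\eps$ so that the failure probability $2K\exp(-2N\eps^2)$ equals the target $2K/T^4$, which forces $\eps=\sqrt{2\log T / N}$; this already takes care of the high-probability statement. Substituting back, the regret on $\gG$ becomes $NK+2T\sqrt{2\log T / N}$, and balancing the two terms in $N$ gives $N\propto (T/K)^{2/3}(\log T)^{1/3}$, which is precisely the choice in the theorem. A routine calculation then shows each term is of order $(KT^2\log T)^{1/3}$, producing the claimed bound on $\gG$.

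The main \emph{conceptual} obstacle here is really just the balancing step: one must check that the $\eps$ dictated by the prescribed confidence level $1-2K/T^4$ is consistent with the $N$ that minimises $NK+2T\eps$, so that the exponent $4$ in $T^4$ and the exponent $1/3$ in the regret match up cleanly. Beyond that, one small subtlety worth flagging is that on the bad event $\gG^c$ the regret can be as large as $T$, but since we are only proving a high-probability bound this contribution does not need to be absorbed into the stated rate; if one wanted to convert the statement into a bound on $\E(R_T)$, the extra term $T\cdot\Pr(\gG^c)\le 2K/T^3$ would be of lower order and harmless.
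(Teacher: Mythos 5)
Your proposal is correct and follows essentially the same route as the paper's proof: the same exploration/exploitation decomposition, the same good event controlled by Hoeffding's inequality plus a union bound over the $K$ arms, the same $2\eps$ gap bound for $\hat{a}_*$, and the same choice $\eps=\sqrt{2\log T/N}$ with the terms balanced to yield $N=(T/K)^{2/3}(\log T)^{1/3}$. Your closing remark about the bad event $\gG^c$ is a nice additional observation that the paper only gestures at.
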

\begin{proof}
  Let \( \hat\mu(a) = 1/N \sum_{t=1}^{N} r_t(a) \) be empirical average reward of action \( a \).
  Define \( \varepsilon = \sqrt{(2 \log T)/N} \).
  Also, define an event \( E = \{ \abs{\hat\mu(a) - \mu(a)} \leq \varepsilon; \forall a \in \A \} \) where all actions' estimates are within \( \varepsilon \) distance to the respective true values.

  Assume that \( E \) holds.
  Recall that \( \hat{a}_* = \argmax_{a \in \A} \hat\mu(a) \) and \( a_* = \argmax_{a \in \A} \mu(a) \).
  So, by definition,
  \begin{equation*}
    \hat\mu(\hat{a}_*) \geq \hat\mu(a_*) \quad\text{and}\quad  \mu(\hat{a}_*) \leq \mu(a_*) .
  \end{equation*}
  Now, since \( E \) holds, \( \hat\mu(\hat{a}_*) - \mu(\hat{a}_*) \leq \varepsilon \) and \(  \mu(a_*) - \hat\mu(a_*) \leq \varepsilon \).
  (Notice the absolute value in \( \E \).)
  And so,
  \begin{equation*}
    \mu(\hat{a}_*) + \varepsilon \geq \hat\mu(\hat{a}_*) \quad\text{and}\quad \hat\mu(a_*) \geq \mu(a_*) - \varepsilon .
  \end{equation*}
  Altogether they imply
  \begin{equation*}
    \begin{aligned}
       & \mu(\hat{a}_*) + \varepsilon \geq \hat\mu(\hat{a}_*) \geq \hat\mu(a_*) \geq \mu(a_*) - \varepsilon \\
       & \iff \mu(\hat{a}_*) + \varepsilon \geq \mu(a_*) - \varepsilon                                      \\
       & \iff 2 \varepsilon \geq \mu(a_*) - \mu(\hat{a}_*) .
    \end{aligned}
  \end{equation*}
  Hence, we have \( \mu(a_*) - \mu(\hat{a}_*) \leq 2 \sqrt{(2 \log T)/N} \).
  This is the bound on the regret during the exploitation phase, assuming that \( E \) holds.
  The upper bound on the regret during the exploration phase is trivially \( NK \) since \( \mu(\,\cdot\,) \in [0, 1] \).
  Thus, under \( E \),
  \begin{equation*}
    \E(R_T) \leq NK + \sum_{t=1}^{T-NK} 2 \sqrt{\frac{2 \log T}{N}} = NK + 2 (T-NK) \sqrt{\frac{2 \log T}{N}}.
  \end{equation*}
  Substituting \( N = (T/K)^{2/3} (\log T)^{1/3} \), we obtain \( \E(R_T) \leq \O((K T^2 \log T)^{1/3}) \).
  It is clear that \( \lim_{T \to \infty} \E(R_T) / T = 0 \) since \( (T^2 \log T)^{1/3}/T = (\log T)/T \).

  Now we compute the probability of the event \( E \).
  Using Hoeffding's inequality (\cref{thm:hoeffding}), we can bound the deviation \( \abs{\hat\mu(a) - \mu(a)} \) of our estimate to the true expected reward of action \( a \):
  \begin{equation*}
    \Pr(\abs{\hat\mu(a) - \mu(a)} \geq \varepsilon) \leq 2 \exp\left( - 2 N \varepsilon^2 \right) = \frac{2}{T^4} .
  \end{equation*}
  The complement of \( E \) is \( E^c = \{ \abs{\hat\mu(a) - \mu(a)} \geq \varepsilon; \exists a \in \A \} \).
  By the union bound (\cref{thm:union-bound}), we have
  \begin{equation*}
    \Pr(E^c) = \Pr \left( \bigcup_{a=1}^K \{ \abs{\hat\mu(a) - \mu(a)} \geq \varepsilon \} \right) \leq \frac{2K}{T^4} .
  \end{equation*}
  So, \( \Pr(E) = 1 - \Pr(E^c) \geq 1 - \frac{2K}{T^4} \).
  This is the probability of attaining the regret bound below.
  Note that we can ignore the event \( E^c \) since it occurs with such a low probability.
\end{proof}

\begin{remark}
  In summary, the proof strategy boils down to
  \begin{enumerate}
    \item defining a event \( E \) that encompass ``nice'' properties for our analysis,
    \item bounding the expected regret \( \mu(a_*) - \mu(a_t) \) at each time step \( t \) under \( E \),
    \item extending it to the bound of the cumulative expected regret \( \E(R_T) \) by summing them,
    \item reasoning about its expected value using the union bound and concentration inequality,
    \item arguing that the probability of the event \( E \) is high.
  \end{enumerate}
  To get the value for \( N \) in the hypothesis, one can aim to solve for the bound w.r.t.\ \( N \) s.t.\ the bound is minimized (i.e.\ tighter).
  If we only care about showing the no-regret property, we can pick \( N \) such that the bound is \emph{sublinear} in \( T \).
  Because, then, \( \E(R_T) \) will grow slower than \( T \) and thus \( \E(R_T) / T \) will converge to \( 0 \) \( \implies \) no regret.
\end{remark}

\section{Upper Confidence Bound (UCB)}
\label{sec:bandits:ucb}

Let us now consider the following decision rule \citep{auer2002ucb}:
At each time \( t = 1, \dots, T \), we pick an action that maximizes the function
\begin{equation} \label{eq:bandits:ucb}
  \mathrm{UCB}_t(a) = \mu_t(a) + \sqrt{(2 \log T) / N_t(a)} ,
\end{equation}
where \( \mu_t(a) = 1/N_t(a) \sum_{i=1}^{t} r_i(a_i) \I(a_i = a) \) is the empirical mean estimate of \( \mu(a) \) after \( t \) rounds,
and \( N_t(a) = \sum_{i=1}^{t} \I(a_t = a) \) is the number of times the action \( a \) has been selected.
The algorithm is summarized in \cref{alg:bandits_frequentist:ucb}.

\begin{algorithm}
  \small

  \caption{UCB}
  \label{alg:bandits_frequentist:ucb}

  \begin{algorithmic}[1]
    \Require{Time horizon \( T \), set of \( K \) actions \( \gA \)}
    \Ensure{Cumulative reward \( r_\text{total} \)}

    \State \( r_\text{total} = 0 \)

    \For{\( t = 1, \dots, T \)}
    \State Count \( N_t(a) \) for each \( a \in \gA \)
    \State Compute \( \mu_t(a) \) for each \( a \in \gA \)
    \State \( a_t = \argmax_{a \in \gA} \mu_t(a) + \sqrt{(2 \log T) / N_t(a)} \)
    \State \( r_{ta} = \texttt{do\_action}(a_t) \)
    \State \( r_\text{total} = r_\text{total} + r_{ta} \)
    \EndFor

    \State \Return \( r_\text{total} \)
  \end{algorithmic}
\end{algorithm}

Intuitively, we maintain both our estimate of \( \mu \) in the form of \( \mu_t \), \emph{and} our ``confidence''---not to be confused with the definition of confidence in the Bayesian setting---about that estimate.
This ``confidence'' is essentially an error bar around \( \mu_t \), the standard error around the sample mean.
If our estimate of an action is high and the error bar is wide, we will therefore tend to pick that action (exploration).
As \( t \) increases, the values of \( N_t \) will increase, and hence the error bars will decrease.
We can then be confident that our estimate \( \mu_t \) is very close to \( \mu \) and we can simply pick the best action every time (exploitation).

\begin{theorem} \label{thm:bandits:ucb}
  At each round \( t = 1, \dots, T \), the UCB algorithm has expected regret of
  \begin{equation*}
    \E(R_t)  \leq  \O\left( \sqrt{K t \log T} \right)  \qquad \text{with probability} \geq 1 - \frac{2K}{T^3} .
  \end{equation*}
  Thus, the UCB algorithm has no regret w.h.p.
\end{theorem}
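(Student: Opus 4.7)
The plan is to follow the five-step template laid out in the remark after \cref{thm:explore-exploit}. First, I would define the ``good event''
\[
  E = \Bigl\{ \abs{\mu_s(a) - \mu(a)} \leq \sqrt{(2 \log T)/N_s(a)} \;\; \text{for all } a \in \A \text{ and } s = 1, \dots, T \Bigr\},
\]
on which every UCB confidence interval contains the true mean, and condition all subsequent analysis on $E$.

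Under $E$, ``optimism in the face of uncertainty'' gives the key chain
\[
  \mu(a_*) \leq \mathrm{UCB}_s(a_*) \leq \mathrm{UCB}_s(a_s) \leq \mu(a_s) + 2 \sqrt{(2 \log T)/N_s(a_s)},
\]
where the first inequality uses $E$ at $a_*$, the middle one uses the definition of $a_s$ as the UCB-maximizer, and the last uses $E$ at $a_s$. Rearranging yields the per-round bound $\mu(a_*) - \mu(a_s) \leq 2\sqrt{(2 \log T)/N_s(a_s)}$. Summing over $s = 1, \dots, t$ and re-indexing by pull count gives
\[
  \sum_{s=1}^{t} \frac{1}{\sqrt{N_s(a_s)}} = \sum_{a \in \A} \sum_{k=1}^{N_t(a)} \frac{1}{\sqrt{k}} \leq 2 \sum_{a \in \A} \sqrt{N_t(a)} \leq 2 \sqrt{K \sum_{a \in \A} N_t(a)} = 2\sqrt{Kt},
\]
using the integral estimate $\sum_{k=1}^{N} k^{-1/2} \leq 2\sqrt{N}$ followed by Cauchy--Schwarz together with the identity $\sum_a N_t(a) = t$. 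Combined with the per-round bound, this yields $R_t \leq 4\sqrt{2 K t \log T} = \O(\sqrt{K t \log T})$ under $E$.

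Next, to lower-bound $\Pr(E)$: fix an arm $a$ and a deterministic value $s \in \{1, \dots, T\}$ of its pull count. Hoeffding's inequality (\cref{thm:hoeffding}) applied to the $s$ i.i.d.\ samples of $r(a)$ yields
\[
  \Pr\Bigl(\abs{\hat\mu_s(a) - \mu(a)} \geq \sqrt{(2 \log T)/s}\Bigr) \leq 2 \exp(-4 \log T) = \frac{2}{T^4}.
\]
Union bounding (\cref{thm:union-bound}) over the $K$ arms and the $T$ possible pull counts $s$ gives $\Pr(E^c) \leq 2KT/T^4 = 2K/T^3$, i.e.\ $\Pr(E) \geq 1 - 2K/T^3$.

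The main obstacle I expect is the subtlety in that final union bound: the pull count $N_t(a)$ is itself a random variable depending on the algorithm's past choices, so Hoeffding cannot be invoked with $N_t(a)$ substituted directly as a sample size. The remedy is exactly the extra union bound over all deterministic realizations $s \in \{1, \dots, T\}$ of the count, which inflates the failure probability by the factor $T$ and is also the reason the UCB confidence term in \eqref{eq:bandits:ucb} uses $\log T$ rather than $\log t$. Once this setup is handled cleanly, everything else is a textbook application of optimism, Cauchy--Schwarz, and the integral bound $\sum_{k=1}^{N} k^{-1/2} \leq 2\sqrt{N}$.
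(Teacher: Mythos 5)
Your proposal is correct and follows essentially the same route as the paper: the same good event $E$, the same optimism chain $\mu(a_*) \leq \mathrm{UCB}_t(a_*) \leq \mathrm{UCB}_t(a_t) \leq \mu(a_t) + 2\varepsilon_t(a_t)$, and the same Hoeffding-plus-union-bound argument giving $\Pr(E^c) \leq 2KT/T^4$. The only differences are cosmetic --- you sum $\sum_{k=1}^{N_t(a)} k^{-1/2} \leq 2\sqrt{N_t(a)}$ and apply Cauchy--Schwarz where the paper substitutes the final-count bound and applies Jensen --- and your explicit remark that the union bound over deterministic pull counts is what licenses applying Hoeffding to the random $N_t(a)$ is a point the paper's proof leaves implicit.
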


\begin{proof}
  Define \( \varepsilon_t(a) = \sqrt{(2 \log T) / N_t(a)} \).
  Suppose the event \( E = \{ \abs{\hat\mu_t(a) - \mu(a)} \leq \varepsilon_t(a); \forall a \in \A, \forall t = 1,\dots,T \} \) holds.
  Let \( a_* \) and \( a_t \) be the (unknown) optimal arm and the selected arm at time \( t \), respectively.
  Since \( a_t \) is selected at time \( t \), then by the algorithm, \( \mathrm{UCB}_t(a_t) \geq \mathrm{UCB}_t(a_*) \).
  Since \( E \) holds, \( \mu(a_t) + \varepsilon_t(a_t) \geq \hat\mu(a_t) \).
  Moreover, by definition, \( \mathrm{UCB}_t(a_*) \geq \mu(a_*) \).
  Therefore,
  \begin{equation*}
    \mu(a_t) + 2 \varepsilon_t(a_t) \geq \hat\mu(a_t) + \varepsilon_t(a_t) = \mathrm{UCB}_t(a_t) \geq \mathrm{UCB}_t(a_*) \geq \mu(a_*) .
  \end{equation*}
  Rearranging, we have
  \begin{equation*}
    \Delta_t(a_t)  := \mu(a_*) - \mu(a_t) \leq 2 \varepsilon_t(a_t) = 2 \sqrt{(2 \log T) / N_t(a_t)} .
  \end{equation*}
  We will use this bound to obtain the bound for \( \E(R_t) \).

  Since we pick a single action at each time step, first we note that \( t = \sum_{a \in \A} N_t(a) \).
  Moreover, the expected total regret \( \E(R_t) \) can be decomposed over actions:
  \begin{equation*}
    \begin{aligned}
      \E(R_t) & = \sum_{i=1}^t \Delta_t(a_i) = \sum_{a \in \A} \sum_{j=1}^{N_t(a)} \Delta_t(a) \\
              & = \sum_{a \in \A}  2 \sqrt{(2 \log T) / N_t(a)} N_t(a)                         \\
              & = 2 \sqrt{(2 \log T)} \sum_{a \in \A} \sqrt{N_t(a)} .
    \end{aligned}
  \end{equation*}

  Now, notice that \( \sqrt{\, \cdot \,} \) is a concave function.
  By Jensen's inequality, we can then bound the average of \( \sqrt{N_t} \) by (recall that \( \abs{\A} = K \))
  \begin{equation*}
    \frac{1}{K} \sum_{a \in \A} \sqrt{N_t(a)} \leq \sqrt{\frac{1}{K} \sum_{a \in \A} N_t(a)} = \sqrt{\frac{t}{K}} .
  \end{equation*}
  This implies that \( \sum_{a \in \A} \sqrt{N_t(a)} \leq K \sqrt{t/K} = \sqrt{K t} \ \)

  Therefore, we can bound \( \E(R_t) \) by
  \begin{equation*}
    \E(R_t) \leq 2 \sqrt{2} \sqrt{\log T} \sqrt{K t} = \O\left( \sqrt{K t \log T} \right) .
  \end{equation*}
  Taking \( t = T \), we clearly see that \( \E(R_T) \) is sublinear.
  Thus, the UCB algorithm has no regret.

  The last thing we need to show is the probability that the results above hold.
  I.e., we want to show that \( E \) holds with high probability.
  By Hoeffding's inequality and subsituting in \( \varepsilon_t(a) \), we obtain \( \Pr( \abs{\hat\mu_t(a) - \mu(a)} \geq \varepsilon_t(a) ) \leq 2/T^4 \).
  Then, by the union bound over \( a \) and \( t \), we obtain \( \Pr(E^c) \leq (2KT)/T^4 \).
  Therefore, \( \Pr(E) \geq 1 - 2K/T^3 \).
  That is, our analysis below will hold with high probability.

\end{proof}


\chapter{Gaussian Processes}
\label{ch:gp}

Let \( f: \X \to \R \) be a function.
When \( X \) is finite, one can think of \( f \) as a collection of function values \( (f(x))_{x \in \X} \) computed across \defword{evaluation/context points} \( \X \).
The same intuitive image can be useful to think of \( f \) in the infinite case.

A \defword{Gaussian process (GP)} can be seen as a probability distribution on a function space \( \H = \{ f: \X \to \R \} \) \citep{williams2006gaussian}.
The defining property of a GP is that any finite collection of evaluation points \( (x_i)_{i=1}^n \subset \X \), the probability distribution over \( (f(x_i))_{i=1}^n \) is \emph{multivariate Gaussian}.
A GP is fully characterized by its \defword{mean function} \( \mu: \X \to \R \) and its \defword{covariance function} \( k: \X \times \X \to \R \).

The covariance function, expressed through a (positive-definite) \defword{kernel} \( k: \X \times \X \to \R \), with the property that it is symmetric in its two arguments and
\begin{equation} \label{eq:gp:pd-kernel}
  \sum_{i=1}^n \sum_{j=1}^n c_i c_j k(x_i, x_j) \geq 0
\end{equation}
holds for all \( (x_i \in \X)_{i=1}^n \), \( (c_i \in \R)_{i=1}^n \), and \( n \in \mathbb{N} \).
The latter can be expressed through linear algebra:
Let \( (\mK)_{ij} = k(x_i, x_j) \) be the matrix with coefficients equal all evaluations of \( k \) under \( (x_i)_{i=1}^n \).
Then \eqref{eq:gp:pd-kernel} is equivalent to saying that \( \mK \) is positive semi-definite.

An example of commonly-used covariance functions is the \defword{Mat\`{e}rn kernel} with smoothness parameter \( \nu \).
This class of kernels induces a GP over the space of functions that is up to \( k \)-times differentiable for \( k < \nu \).
So, with \( \nu = 5/2 \), the GP is over the space of functions that are twice differentiable.
Another example is the \defword{radial basis function (RBF) kernel}, also known as the \defword{squared exponential kernel}.
This can be seen as the limit of the Mat\'{e}rn kernel when \( \nu \to \infty \).
It thus induces a GP on $C^\infty$.
See standard Gaussian process textbooks, e.g.\ \citet{williams2006gaussian}, for definitions.

\section{Posterior Inference}
\label{ch:gp:posterior}

GPs are useful to make predictions about an unknown function \( f \).
Let \( \D := \{ (x_i, f(x_i)) \}_{i=1}^n \) be a dataset.
Assuming a GP prior\footnote{In practical applications, \( \mu \) is often simply set to the zero function.} \( p(f) = \GP(0, k) \) over \( f \), the GP posterior is described through the updated mean and covariance functions \( \mu(\cdot \mid \D): \X \to \R \) and \( k(\cdot, \cdot \mid \D): \X \times \X \to \R \), respectively.
They are characterized by
\begin{align}
  \label{eq:gp:post-mean}
  \mu(x \mid \D) & = k(x, X) \left( k(X, X) + \sigma_n^2 I \right)^\inv Y                   \\
  \label{eq:gp:post-cov}
  k(x \mid \D)   & = k(x, x) - k(x, X) \left( k(X, X) + \sigma_n^2 I \right)^\inv k(X, x) ,
\end{align}
where we have defined shorthands \( X := (x_i)_{i=1}^n \) and \( Y := (f(x_i))_{i=1}^n \in \R^n \).
Also, \( k(x, X) \), \( k(X, X) \), and \( k(X, x) \) are the matrix representations of the kernel under those evaluation points.
Finally, \( \sigma_n^2 > 0 \) is a measurement noise assumed in evaluating \( f(x) \), i.e.\ \( y = f(x) + \varepsilon \) where \( \varepsilon \sim \N(0, \sigma_n^2) \).

\section{Reproducing Kernel Hilbert Space}
\label{ch:gp:rkhs}

As mentioned before, a GP defines a probability distribution on a function space.
What exactly is that function space?
Inspecting \eqref{eq:gp:post-mean}, we see that \( \GP(0, k) \) describes a set of posterior means
\begin{equation}
  \label{eq:gp:rkhs-mean}
  x \mapsto \sum_{i=1}^n \alpha_i k(x_i, x)  \qquad \text{where } \alpha_i = \left( k(X, X) + \sigma_n^2 I \right)^\inv Y ,
\end{equation}
for under all possible dataset \( \D \).
Note that this set of functions is fully characterized by the choice of the kernel of a GP.
Indeed, \( (k(x_i, \cdot) \in \R^n)_{i=1}^n \), seen as vectors, act as a basis of the resulting functions.
These basis vectors vary depending on the evaluation points \( X \).

We define the \defword{reproducing kernel Hilbert space (RKHS)} \( \H_k \) of \( \GP(0, k) \) to be the completion of the space of functions above.
It is endowed with the inner product
\begin{equation}
  \inner{f, f'} = \sum_{i=1}^n \sum_{j=1}^m \alpha_i \alpha'_j k(x_i, x'_j)
\end{equation}
for \( f = \sum_{i=1}^n \alpha_i k(x_i, \cdot) \) and \( f' = \sum_{j=1}^m \alpha'_j k(x'_j, \cdot) \).

The RKHS inner product induced a norm \( \norm{\cdot}_{H_k} \) that tells us about the ``complexity'' of a function in \( H_k \).
Under this norm, we can define the \defword{RKHS ball} of radius \( r \) by
\begin{equation}
  \H_k[b] := \{ f \in H_k \text{ such that } \norm{f}_{H_k} \leq b \} ,
\end{equation}
which contains all possible GP posterior means under a kernel \( k \) with ``complexity'' at most \( b \).

\section{Information Capacity}
\label{sec:gp:info}

Since GPs are useful for learning an unknown function \( f \) through a dataset \( \D \), it is useful to know how well we can learn \( f \) with a GP prior \( \GP(0, k) \) through noisy observations of \( f \) with noise variance \( \sigma_n^2 \).
This notion is termed \defword{information capacity}.
Intuitively, the information encoded in the GP prior through the covariance function \( k \) determines the information content of \( f \), while the noise level \( \sigma_n^2 \) limits the amount of information provided by observations.

The information regarding \( f \) expressed through \( \D \) can be described by the \defword{mutual information}, also known as the \defword{information gain}:
\begin{equation}
  \mathrm{MI}(Y, f) := \frac{1}{2} \log \det (I + \sigma_n^{-2} K(X, X)) .
\end{equation}
The information capacity is then defined as the maximum information gain through a dataset \( \D = (X, Y) \) of size \( T \):
\begin{equation} \label{eq:gp:info-capacity}
  \gamma_T(f) := \sup_{\abs{\D} = T} \mathrm{MI}(Y, f) .
\end{equation}
As a motivating example, \( \D \) could be obtained through a sequential decision-making process, and we want to know how well we have learned about an unknown function \( f \) under some observation noise \( \sigma_n^2 \) after \( T \) steps.
If the function \( f \) is clear from the context, one can also simply write this quantity as \( \gamma_T \).

For compact \( \gX \subset \R^d \) and a fixed \( \sigma_n \), we have the following, depending on the covariance function \( k \) \citep{srinivas2010gpucb}:
\begin{itemize}
  \item Mat\'{e}rn with smoothness parameter \( \nu \): \( \gamma_t = \O(T^\alpha (\log T)^{1- \alpha}) \) where \( \alpha = d/(2\nu + d) \).
  \item RBF: \( \gamma_T = \O((\log T)^{d+1}) \).
\end{itemize}
See \citet{srinivas2010gpucb} for the detailed discussion.
The intuition is as follows:
The smoother the function \( f \) is (i.e., as \( \nu \) increases), the less information we gain through new data points, since we can already easily predict the function values on the other regions of \( \gX \).
Put another way, smooth functions have less ``surprise''.

The following result is an important application of the maximum information gain.
We will use it extensively in the subsequent chapters.
Suppose we have selected \( T \) observations at context points \( (x_t)_{t=1}^T \).
Let \( (\sigma_t^2(x_t))_{t=1}^T \) be the predictive variance of \( x_t \)'s under the GP at each time step \( t \).
Through the chain rule for mutual information, the information capacity \eqref{eq:gp:info-capacity} can be written as
\begin{equation}
  \mathrm{MI}(Y, f) = \frac{1}{2} \sum_{t=1}^T \log \left( 1 + \frac{\sigma_t^2(x_t)}{\sigma_n^2} \right) .
\end{equation}

\begin{boxedtheorem}[\citeauthor{srinivas2010gpucb}, \citeyear{srinivas2010gpucb}] \label{prop:gp:sum-variance}
  Given \( m \in \R\), let \( k(x, x) \leq m \) for all \( x \in \X \).
  Then \( \sum_{t=1}^T \sigma_t^2(x_t) = \O(\gamma_T) \).
  More specifically, \( \sum_{t=1}^T \sigma_t^2(x_t) \leq \frac{2m}{\log(1 + \sigma^{-2}_n m)} \gamma_T \).
\end{boxedtheorem}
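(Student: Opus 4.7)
The plan is to exploit the identity
\[
  \mathrm{MI}(Y,f) = \tfrac{1}{2} \sum_{t=1}^T \log\!\left(1 + \tfrac{\sigma_t^2(x_t)}{\sigma_n^2}\right)
\]
stated just before the proposition, together with the assumed upper bound $k(x,x) \leq m$. Since posterior variances never exceed the prior variance, I would first observe that $\sigma_t^2(x_t) \leq k(x_t,x_t) \leq m$ for every $t$. The whole proof then reduces to comparing $\sigma_t^2(x_t)$ term-by-term against $\log(1 + \sigma_t^2(x_t)/\sigma_n^2)$, so that after summing we can identify $\mathrm{MI}(Y,f)$ and finally bound it by $\gamma_T$.

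The key auxiliary inequality is the following: for any $u \in [0,c]$ with $c > 0$,
\[
  u \leq \frac{c}{\log(1+c)} \log(1 + u) .
\]
I would prove this by concavity of $u \mapsto \log(1+u)$: on the interval $[0,c]$, the chord from $(0,0)$ to $(c,\log(1+c))$ lies below the graph, giving $\log(1+u) \geq \frac{\log(1+c)}{c}\,u$, which rearranges into the desired inequality. Applying it with $u = \sigma_t^2(x_t)/\sigma_n^2$ and $c = m/\sigma_n^2 = \sigma_n^{-2} m$ yields
\[
  \frac{\sigma_t^2(x_t)}{\sigma_n^2}
  \leq \frac{\sigma_n^{-2} m}{\log(1 + \sigma_n^{-2} m)}\, \log\!\left(1 + \frac{\sigma_t^2(x_t)}{\sigma_n^2}\right),
\]
and multiplying through by $\sigma_n^2$ eliminates the $\sigma_n^{-2}$ on the right-hand side's leading factor.

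Summing over $t = 1, \dots, T$ and using the mutual-information identity gives
\[
  \sum_{t=1}^T \sigma_t^2(x_t) \leq \frac{m}{\log(1 + \sigma_n^{-2} m)} \cdot 2\,\mathrm{MI}(Y, f).
\]
Finally, since the context points $(x_t)_{t=1}^T$ form a particular dataset of size $T$, we have $\mathrm{MI}(Y,f) \leq \gamma_T$ directly from the definition of the information capacity in \eqref{eq:gp:info-capacity}, which closes the bound.

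The only non-routine step is the auxiliary inequality; everything else is bookkeeping. I expect that to be the main obstacle only in the sense of \emph{spotting} it — once the concavity comparison between $u$ and $\log(1+u)$ is in hand, the rest of the argument is essentially substitution and summation. The upper bound $\sigma_t^2(x_t) \leq m$ is what makes the constant $c = \sigma_n^{-2} m$ uniform in $t$, and without it the factor $c / \log(1+c)$ would not be well-defined as a single constant in front of the sum.
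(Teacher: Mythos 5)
Your argument is correct and is essentially the standard proof (Lemma 5.4 of \citet{srinivas2010gpucb}), which this monograph states without proof and defers to that reference: the chord/concavity inequality \( u \leq \frac{c}{\log(1+c)}\log(1+u) \) on \( [0,c] \) with \( c = \sigma_n^{-2}m \), combined with \( \sigma_t^2(x_t) \leq k(x_t,x_t) \leq m \), the sum-of-logs expression for \( \mathrm{MI}(Y,f) \), and \( \mathrm{MI}(Y,f) \leq \gamma_T \). All steps check out, including the constant \( \frac{2m}{\log(1+\sigma_n^{-2}m)} \).
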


\section{Useful Inequalities}
\label{sec:gp:ineq}

The following result, known as (some variant of) the Borell-TIS inequality \citep{van1996weak}, is useful to bound the frequency of the supremum of GP sample paths.

\begin{boxedtheorem}[Borell-TIS Inequality] \label{thm:borell-tis}
  Let \( \X \) be a topological space and let \( f \sim \GP(0, k) \) be a sample path of a centered Gaussian process on \( \X \).
  If \( \sup_{x \in \X} \abs{f(x)} \) finite, then for every \( \lambda > 0 \),
  \begin{equation}
    \Pr( {\textstyle \sup_{x \in \X} \abs{f(x)} \geq \lambda } ) \leq 2 \exp\left( \frac{-\lambda^2}{8 \E \left( \sup_{x \in \X} \abs{f(x)} \right)^2} \right) .
  \end{equation}
\end{boxedtheorem}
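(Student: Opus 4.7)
Write $M := \sup_{x \in \X} \abs{f(x)}$ and $\sigma^2 := \sup_{x \in \X} k(x,x)$. By hypothesis $M < \infty$ almost surely, and Fernique's theorem upgrades this to $\E(M^2) < \infty$, whence also $\sigma^2 = \sup_x \E(f(x)^2) \leq \E(M^2) < \infty$. The plan is to reduce the claim to the classical Borell-Sudakov-Tsirelson concentration inequality for Gaussian suprema, and then pass from its mean-centered form to the absolute form stated in the theorem by a case split on $\lambda$.

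The workhorse, which I would invoke as a black box from \citet{van1996weak}, reads: for a centered Gaussian process with $M < \infty$ almost surely,
\[
\Pr(M - \E(M) \geq t) \leq \exp\!\left(-\frac{t^2}{2 \sigma^2}\right) \qquad \text{for all } t > 0.
\]
The standard derivation replaces $f$ by a separable modification, approximates $M$ by finite maxima $M_n = \max_{i \leq n} \abs{f(x_i)}$, realizes each $M_n$ as a $\sigma$-Lipschitz function of a standard Gaussian vector, and invokes the Gaussian isoperimetric inequality (equivalently, concentration for Lipschitz functionals of standard Gaussians).

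Next I would relate $\sigma$ to $\E(M)$. For each fixed $x$, $f(x) \sim \N(0, k(x,x))$ gives $\E(\abs{f(x)}) = \sqrt{2 k(x,x)/\pi}$, so $k(x,x) = (\pi/2)(\E(\abs{f(x)}))^2 \leq (\pi/2)(\E(M))^2$, and taking the supremum yields $\sigma^2 \leq (\pi/2)(\E(M))^2$. I would then split on $\lambda$. If $\lambda \leq \E(M)$, the right-hand side $2 \exp(-\lambda^2/(8 (\E(M))^2)) \geq 2 e^{-1/8} > 1$ already dominates any probability, so the bound is trivial. If $\lambda > \E(M)$, Borell's inequality combined with the variance estimate above gives
\[
\Pr(M \geq \lambda) \leq \Pr(M - \E(M) \geq \lambda - \E(M)) \leq \exp\!\left(-\frac{(\lambda - \E(M))^2}{\pi (\E(M))^2}\right),
\]
so the claim reduces to the elementary inequality $u^2/8 - (u-1)^2/\pi \leq \log 2$, where $u := \lambda/\E(M) > 1$. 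This is a downward parabola in $u$ whose maximum is straightforward to bound above by $\log 2$ via completing the square.

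The main obstacle is importing Borell's inequality, whose proof rests on the non-trivial Gaussian isoperimetric theorem (or an equivalent Herbst-type semigroup argument); I would cite it rather than reproduce it. The remaining work is bookkeeping: a one-line Gaussian first-moment identity for the variance-to-mean comparison, and a short quadratic estimate confirming that the factor $8$ in the exponent is tuned precisely so that the case split closes.
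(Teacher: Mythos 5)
The paper does not prove this statement at all: it is imported as a boxed, cited result from \citet{van1996weak}, so there is no in-text proof to compare against. Your derivation is a correct and self-contained reduction of the stated form (with its particular constants $2$ and $8$) to the classical mean-centered Borell--Sudakov--Tsirelson inequality, which you reasonably treat as the black box since its proof rests on Gaussian isoperimetry. The pieces check out: writing $M = \sup_{x}\abs{f(x)}$ and $\sigma^2 = \sup_x k(x,x)$, the identity $\E\abs{f(x)} = \sqrt{2k(x,x)/\pi}$ gives $\sigma^2 \leq (\pi/2)(\E M)^2$; the case $\lambda \leq \E M$ is vacuous because $2e^{-1/8} > 1$; and for $u = \lambda/\E M > 1$ the quadratic $u^2/8 - (u-1)^2/\pi$ is concave with maximum value $-1/\pi + (1/\pi^2)/(1/\pi - 1/8) \approx 0.206 < \log 2$, so the case split closes with room to spare. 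Two points you handle correctly but that deserve emphasis because they are where such an argument usually goes wrong: the concentration inequality must be applied to $\sup_x\abs{f(x)}$ rather than $\sup_x f(x)$, which your Lipschitz-functional formulation of the finite-dimensional approximation takes care of (the relevant Lipschitz constant is still $\sigma$); and the integrability $\E(M) < \infty$ needed to even state the mean-centered inequality does not follow from almost-sure finiteness without Fernique's theorem, which you invoke. The only cosmetic gap is the degenerate case $\E(M) = 0$, where $M = 0$ almost surely and the claim is trivial. Note also that the theorem's denominator $\E\left(\sup_x \abs{f(x)}\right)^2$ is typographically ambiguous between $(\E M)^2$ and $\E(M^2)$; you prove the stronger reading $(\E M)^2$, which implies the other by Jensen, so either interpretation is covered.
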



\chapter{Discrete Bayesian Optimization}
\label{ch:discrete-bo}

To start, we assume the search space (action space in the bandit lingo) \( \X \) is finite.
This is practically very relevant, e.g.\ in drug and materials discovery applications.

In \defword{Bayesian optimization (BO)}, we want to (w.l.o.g.) maximize an \emph{unknown} function \( f: \X \to \R \).\footnote{For simplicity, we assume a real-valued function.}
This implies that the maximizer \( x_* = \argmax_{x \in \X} f(x) \) is also unknown.
While we do not know \( f \) holistically, we assume we can \emph{evaluate} \( f(x) \) for any \( x \in \X \).
Note, however, that this evaluation is, in general, very costly, and we want to find the maximum with as few evaluations as possible.

Since \( f \) is unknown, we define a prior \( p(f) = \GP(\mu, k) \) with a mean function \( \mu: \X \to \R \) and kernel/covariance function \( k: \X \times \X \to \R \).
At each iteration \( t = 1, \dots, T \), a BO algorithm will select an evaluation point\footnote{One can also select a \emph{batch} of evaluation points, but this is outside the scope of the current discussion.} \( x_t \in \X \) through an acquisition function \( \alpha(x; D_t) = \E_{p(f \mid \D_t)}(u(x, f)) \) where \( u \) is a utility function and \( p(f \mid \D_t) \) is the posterior belief over \( f \) after observing previously gathered data points \( \D_t = \{ (x_i, f(x_i)) \}_{i=1}^{t-1} \).
More specifically, the algorithm will select \( x_t = \argmax_{x \in \X} \alpha(x; D_t) \) and evaluate \( f(x_t) \).
This process is repeated until termination at time \( T \); see \cref{alg:discrete-bo}

\begin{algorithm}
  \small

  \caption{Discrete GP-UCB for BO}
  \label{alg:discrete-bo}

  \begin{algorithmic}[1]
    \Require{Time budget \( T \), GP prior \( \GP(\mu, k) \), unknown function \( f \)}
    \Ensure{Maximum of \( f \) found after \( T \) steps}

    \For{\( t = 1, \dots, T \)}
    \State Count \( N_t(a) \) for each \( a \in \gA \)
    \State Compute \( \mu_t(a) \) for each \( a \in \gA \)
    \State \( a_t = \argmax_{a \in \gA} \mu_t(a) + \sqrt{(2 \log T) / N_t(a)} \)
    \State \( r_{ta} = \texttt{do\_action}(a_t) \)
    \State \( r_\text{total} = r_\text{total} + r_{ta} \)
    \EndFor

    \State \Return \( r_\text{total} \)
  \end{algorithmic}
\end{algorithm}

As in the bandit case, we can use \emph{regret} as a measure of BO performance.
First, we define \defword{instantaneous regret}:
\begin{equation}
  r_t := f(x_*) - f(x_t) ,
\end{equation}
i.e., it measures how far away we are from the maximum when we pick a particular evaluation point \( x_t \in \X \).
Then, we define \defword{cumulative regret} by summing:
\begin{equation}
  R_T := \sum_{t=1}^T r_t = \sum_{t=1}^T f(x_*) - f(x_t) = T f(x_*) - \sum_{t=1}^T f(x_t) .
\end{equation}
A BO algorithm is said to have \defword{no regret} if
\begin{equation*}
  \lim_{T \to \infty} R_T/T = 0 .
\end{equation*}
That is, \( R_T \) is \emph{sublinear} in \( T \).
Taking into account all sources of randomness (in our belief about \( f \) and the construction of \( \D_t \)), we define the (Bayesian)
\defword{expected regret} by \( \E(R_T) \).
Correspondingly, an algorithm has no regret if \( \lim_{T \to \infty} \E(\R_T)/T = 0 \).
One can also prove bounds on \( R_T \) (and not on \( \E(R_T) \)) by arguing that they hold with high probability.
In fact, the latter is stronger.

In what follows, we prove some results for various assumptions about the acquisition function \( \alpha \), under the following regularity assumptions:
\begin{enumerate}[(i)]
  \item The target function \( f \) can be sampled from the prior \( \GP(0, k) \).
  \item The marginal variance induced by the kernel is bounded: \( k(x, x) \geq 1 \) for all \( x \in \X \).
  \item The observation noise \( \sigma_n^2 \geq 0 \) does not depend on \( x \) (\emph{homoskedastic}).
\end{enumerate}

\section{GP-UCB: High-Probability Regret Bound}
\label{sec:discrete-bo:ucb}

This section introduces one technique for proving a regret bound, i.e., a high-probability regret bound.
The algorithm and proof are adapted from the seminal work of \citep{srinivas2010gpucb}.

Similar to UCB in the bandit setting, we pick an evaluation point \( x_t \) by maximizing the upper confidence bound.
Since we have a posterior distribution over \( f \), given by the GP posterior \( \GP(\mu(\cdot \mid \D_t), k(\cdot, \cdot \mid \D_t)) \), we use it to construct our confidence bound at time \( t \).
Defining \( \mu_t := \mu(\cdot \mid \D_t) \), \( k_t := k(\cdot, \cdot \mid \D_t) \), and \( \sigma_t(x) := \sqrt{k_t(x, x)} \) as shorthands, we define our decision-making policy:
\begin{equation} \label{eq:discrete-bo:ucb}
  x_t = \argmax_{x \in \X} \mu_t(x) + \beta_t \sigma_t(x) ,
\end{equation}
where \( \beta_t > 0 \) is a time-dependant hyperparameter.\footnote{Large \( \beta_t \) implies more exploration.}

\begin{theorem}[Discrete GP-UCB] \label{thm:discrete-bo:ucb}
  Let \( X \) be a finite set, \( f: \X \to \R \), and \( \delta \in (0, 1) \).
  Assume \( f \sim \GP(0, k) \) is in the sample paths of the GP prior and w.l.o.g., the marginal variance of the GP is bounded \( k(x, x) \geq 1 \) for any \( x \in \X \).
  For all time horizons \( T \geq 1 \), with \( \beta^2_t = 2 \log\left( \nicefrac{t^2 \pi^2 \abs{\X}}{6 \delta} \right) \), the GP-UCB algorithm has regret
  \begin{equation*}
    R_T \leq \O^*\left(\sqrt{T \gamma_T \log \abs{X}}\right)  \qquad \text{with probability } \geq 1 - \delta ,
  \end{equation*}
  where \( \gamma_T \) is the information capacity of the GP \eqref{eq:gp:info-capacity} and \( \O^* \) is \( \O \) with some log-factors supressed.
\end{theorem}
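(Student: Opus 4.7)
The plan is to follow the standard confidence-bound template in three stages: (i) define a high-probability ``good event'' on which the posterior credible intervals contain $f(x)$ uniformly over $\X$ and over rounds $t$; (ii) on that event, convert the UCB selection rule into a per-step regret bound expressed through the posterior standard deviation $\sigma_t(x_t)$; (iii) sum over $t$ using Cauchy--Schwarz and the variance-sum bound of \cref{prop:gp:sum-variance}. This mirrors the proof strategy articulated in the remark after \cref{thm:explore-exploit}.

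For step (i), define
\begin{equation*}
  E = \{ \abs{f(x) - \mu_t(x)} \leq \beta_t \sigma_t(x) \text{ for every } x \in \X \text{ and every } t \geq 1 \}.
\end{equation*}
Because $f \sim \GP(0, k)$, conditionally on $\D_t$ the marginal $f(x)$ is Gaussian with mean $\mu_t(x)$ and variance $\sigma_t^2(x)$. The Gaussian tail bound (\cref{thm:gaussian-tail-srinivas}) with $\beta = \beta_t$ gives a per-$(x,t)$ failure probability of at most $\exp(-\beta_t^2/2) = 6\delta/(t^2 \pi^2 \abs{\X})$. Applying the union bound (\cref{thm:union-bound}) first over $x \in \X$ and then over $t \geq 1$, and using $\sum_{t \geq 1} 1/t^2 = \pi^2/6$, yields $\Pr(E) \geq 1 - \delta$. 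The careful choice of the $t^2$ factor inside $\beta_t^2$ is precisely what makes this double union bound converge.

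For step (ii), assume $E$ holds. The UCB rule \eqref{eq:discrete-bo:ucb} gives
\begin{equation*}
  \mu_t(x_t) + \beta_t \sigma_t(x_t) \geq \mu_t(x_*) + \beta_t \sigma_t(x_*) \geq f(x_*),
\end{equation*}
where the second inequality uses $E$ at $x = x_*$. The event $E$ at $x = x_t$ also gives $f(x_t) \geq \mu_t(x_t) - \beta_t \sigma_t(x_t)$. Subtracting yields the instantaneous regret bound
\begin{equation*}
  r_t = f(x_*) - f(x_t) \leq 2 \beta_t \sigma_t(x_t).
\end{equation*}

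For step (iii), since $\beta_t$ is non-decreasing in $t$, summing and applying Cauchy--Schwarz gives
\begin{equation*}
  R_T \leq 2 \beta_T \sum_{t=1}^T \sigma_t(x_t) \leq 2 \beta_T \sqrt{T \sum_{t=1}^T \sigma_t^2(x_t)}.
\end{equation*}
\Cref{prop:gp:sum-variance} then bounds $\sum_{t=1}^T \sigma_t^2(x_t) = \O(\gamma_T)$, so $R_T \leq \O(\beta_T \sqrt{T \gamma_T})$. Substituting $\beta_T^2 = 2\log(T^2 \pi^2 \abs{\X}/(6\delta)) = \O(\log T + \log \abs{\X})$ and absorbing the logarithmic factor in $T$ into the $\O^*$ notation yields $R_T \leq \O^*(\sqrt{T \gamma_T \log \abs{\X}})$ on $E$, i.e.\ with probability at least $1 - \delta$. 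The main technical obstacle is step (i): one must calibrate $\beta_t$ delicately so that the union bound over both the finite set $\X$ and the infinite time horizon closes at level $\delta$, which is exactly what motivates the $t^2 \pi^2 \abs{\X}/(6\delta)$ scaling inside the logarithm.
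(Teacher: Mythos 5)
Your proposal is correct and follows essentially the same route as the paper's own proof: the same uniform confidence event $E$, the same per-step bound $r_t \leq 2\beta_t\sigma_t(x_t)$, the same Cauchy--Schwarz plus \cref{prop:gp:sum-variance} summation, and the same $\abs{\X}$-then-$t$ union bound closed by $\sum_t 1/t^2 = \pi^2/6$. Your explicit chain $\mu_t(x_t) + \beta_t\sigma_t(x_t) \geq \mu_t(x_*) + \beta_t\sigma_t(x_*) \geq f(x_*)$ is in fact a slightly cleaner rendering of the selection-rule step than the paper's.
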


\begin{proof}
  We define the following confidence interval of a function evaluation \( f(x) \) on \( x \):
  \begin{equation}
    C_t(x) := [\underbrace{\mu_t(x) - \beta_t \sigma_t(x)}_{\mathrm{LCB}_t(x)}, \underbrace{\mu_t(x) + \beta_t \sigma_t(x)}_{\mathrm{UCB}_t(x)}] .
  \end{equation}
  Assume that the following event holds:
  \begin{equation*}
    E = \{ f(x) \in C_t(x) \text{ for all } x \in \X \text{ and for all } t \geq 1 \}.
  \end{equation*}
  Fix \( T \geq 1 \) to be the time horizon of the algorithm.
  Note that, \( f(x_*) \in C_t(x_t) \) for all \( t \geq 1 \) under the event \( E \).
  Therefore, since \( f(x_*) \leq \mathrm{UCB}_t(x_t) \) and \( f(x_t) \geq \mathrm{LCB}_t(x_t) \), a bound of the instantaneous regret \( r_t = f(x_*) - f(x_t) \) follows:
  \begin{equation*}
    \begin{aligned}
      r_t & \leq \mathrm{UCB}_t(x_t) - \mathrm{LCB}_t(x_t) = \mu_t(x_t) + \beta_t \sigma_t(x_t) - \mu_t(x_t) + \beta_t \sigma_t(x_t) \\
          & = 2 \beta_t \sigma_t(x_t) .
    \end{aligned}
  \end{equation*}
  Notice that \( \beta_t \) is non-decreasing and thus we can bound it by \( \beta_t \leq \beta_T \).
  Then, summing up the square of the instantaneous regrets yields
  \begin{align*}
    \sum_{t=1}^T r_t^2 & \leq 4 \sum_{t=1}^T \beta_t^2 \sigma_t^2(x_t) \leq 4 \beta_T^2 \sum_{t=1}^T \sigma_t^2(x_t) \leq  \O(\beta_T^2 \gamma_T)
  \end{align*}
  where we have used the bound on the sum of predictive variances w.r.t.\ the information capacity as described in \cref{prop:gp:sum-variance} with \( m = 1 \).

  Recall that the Cauchy-Schwarz inequality states \( \left(\sum_{t=1}^T a_t b_t\right)^2 \leq \left( \sum_{t=1}^T a_t^2 \right) \left( \sum_{t=1}^T b_t^2 \right) \).
  Letting \( a_t = r_t \) and \( b_t = 1 \) for each \( t = 1, \dots, T \) yields
  \begin{equation*}
    R_T^2 \leq T \sum_{t=1}^T r_t^2 \qquad \implies \qquad R_T \leq \O\left(\sqrt{T \beta_T^2 \gamma_T}\right) .
  \end{equation*}
  Substituting in \( \beta_T^2 = 2 \log\left( \frac{T^2 \pi^2 \abs{\X}}{6 \delta} \right) \) we obtain
  \begin{equation*}
    R_T \leq \O \left( \sqrt{ 2 T (\log \abs{\X} + \log(T^2 \pi^2 / (6\delta)) } \right) = \O^* \left( \sqrt{ 2 T \log \abs{\X} } \right) .
  \end{equation*}
  This proves the regret bound.

  The remaining task is to argue that this bound holds with high probability.
  Recall that we assumed that \( E \) holds.
  We need to show that \( \Pr(E) \geq 1 - \delta \).
  Fix \( t \).
  By \cref{thm:gaussian-tail-srinivas}, we have
  \begin{equation*}
    \Pr(\abs{f(x) - \mu_t(x)} \geq \beta_t \sigma_t(x)) \leq \exp(-\beta_t^2 / 2) ,
  \end{equation*}
  Note that this probability is equivalent to \( \Pr(f(x) \not\in C_t(x)) \).
  By the union bound over \( x \), we obtain
  \begin{equation*}
    \Pr(\{ f(x) \not\in C_t(x), \exists x \in \X \}) \leq \abs{\X} \exp(-\beta_t^2 / 2) = \frac{6 \delta}{t^2 \pi^2} .
  \end{equation*}
  Applying the union bound over \( t \), we obtain
  \begin{equation*}
    \Pr(E^c) = \Pr(\{ f(x) \not\in C_t(x), \exists x \in \X, \exists t \geq 1 \}) \leq \sum_{t=1}^\infty \frac{6\delta}{\pi^2 t^2} = \delta \frac{6}{\pi^2} \sum_{t=1}^\infty \frac{1}{t^2} .
  \end{equation*}
  The last series is the Riemann zeta function and readily evaluates to \( \pi^2/6 \).
  Therefore, \( \Pr(E^c) \leq \delta \) and thus \( \Pr(E) \geq 1 - \delta \).
  The proof is now complete.
\end{proof}

\section{GP-TS: Expected Regret Bound}
\label{sec:discrete-bo:thompson}

Unlike the previous section, here, we study a different proof technique: showing a regret bound in expectation.
I.e., instead of proving a high-probability regret bound, we shall prove the expected regret \( \E(R_T) \).
Note that this analysis is weaker than the high-probability analysis one since we only consider the average case, e.g.\ there might be some unexpected cases/outliers that are not taken into account by the expectation.

\defword{Thompson sampling (TS)} is an algorithm where \( x_t = \argmax_{x \in \X} \hat{f} \) where \( \hat{f} \sim p(f \mid \D_t) \).
In other words, \( x_t \sim p(x_* \mid \D_t) \) since the sampling process above is a single-sample Monte-Carlo approximation of \( p(x_* \mid \D_t) = \int \delta(\argmax f) \, p(f \mid \D_t) \, df \), where \( \delta \) is the Dirac delta distribution.
We consider the case where the posterior \( p(f \mid \D_t) \) is a GP posterior \( \GP(\mu_t, k_t) \) and call the algorithm \defword{GP-TS}.
The analysis below is adapted from \citet{russo2014ts}.

\begin{theorem}[Discrete GP-Thompson-Sampling] \label{thm:discrete-thompson}
  Let \( X \) be a finite set and \( f: \X \to \R \).
  Assume \( f \sim \GP(0, k) \) is in the sample paths of the GP prior and w.l.o.g., the marginal variance of the GP is bounded \( k(x, x) \geq 1 \) for any \( x \in \X \).
  For all time horizons \( T \geq 1 \), GP-TS algorithm has expected regret
  \begin{equation*}
    \E(R_T) \leq \O^*\left(\sqrt{T \gamma_T \log \abs{\gX}}\right) ,
  \end{equation*}
  where \( \gamma_T \) is the information capacity of the GP \eqref{eq:gp:info-capacity} and \( \O^* \) is \( \O \) with some log-factors supressed.
\end{theorem}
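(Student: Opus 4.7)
The plan is to mimic the structure of the GP-UCB proof while exploiting the \emph{probability-matching} property of Thompson sampling, which is the crucial ingredient that lets us handle the randomness of \( x_t \) in the Bayesian setting. Because \( f \sim \GP(0, k) \) and \( x_t \) is drawn from \( p(x_* \mid \D_t) \), conditional on the history \( \D_t \) the random variables \( x_* \) and \( x_t \) are identically distributed. Hence, for any function \( g_t \) that is measurable with respect to \( \D_t \), we have the identity \( \E(g_t(x_*) \mid \D_t) = \E(g_t(x_t) \mid \D_t) \). This is the Thompson-sampling analogue of ``\( \mathrm{UCB}_t(x_t) \geq \mathrm{UCB}_t(x_*) \)'' in the GP-UCB analysis.

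Next, I would introduce the same confidence quantity as in GP-UCB: \( U_t(x) := \mu_t(x) + \beta_t \sigma_t(x) \), with \( \beta_t^2 = 2 \log(\abs{\gX}\, t^2) \). Write the instantaneous regret as
\begin{equation*}
  r_t \;=\; \bigl(f(x_*) - U_t(x_*)\bigr) \;+\; \bigl(U_t(x_*) - U_t(x_t)\bigr) \;+\; \bigl(U_t(x_t) - f(x_t)\bigr),
\end{equation*}
take conditional expectation given \( \D_t \), and apply probability matching to kill the middle term. This reduces the analysis to bounding two pieces: (a) \( \E\bigl(f(x_*) - U_t(x_*) \mid \D_t\bigr) \), and (b) \( \E\bigl(U_t(x_t) - f(x_t) \mid \D_t\bigr) \).

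For piece (a), I would use that, conditional on \( \D_t \), \( f(x) - U_t(x) \) is Gaussian with mean \( -\beta_t \sigma_t(x) \leq 0 \) and variance \( \sigma_t^2(x) \). Bounding \( f(x_*) - U_t(x_*) \) by \( \sum_{x \in \gX} (f(x) - U_t(x))_+ \) and invoking \cref{thm:gaussian-tail-nonpositive} on each summand gives
\begin{equation*}
  \E\bigl(f(x_*) - U_t(x_*) \mid \D_t\bigr) \;\leq\; \sum_{x \in \gX} \frac{\sigma_t(x)}{\sqrt{2\pi}} \exp\bigl(-\beta_t^2 / 2\bigr) \;\leq\; \frac{1}{\sqrt{2\pi}\, t^2},
\end{equation*}
which is summable in \( t \) and therefore contributes only a constant to \( \E(R_T) \). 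For piece (b), since \( f(x_t) \mid \D_t, x_t \sim \N(\mu_t(x_t), \sigma_t^2(x_t)) \) (this uses assumption (i) that \( f \) is a true draw from the prior), we get \( \E(U_t(x_t) - f(x_t) \mid \D_t, x_t) = \beta_t \sigma_t(x_t) \), leaving us with \( \beta_t\, \E(\sigma_t(x_t) \mid \D_t) \).

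Finally, I would assemble the pieces by summing over \( t \), pulling out \( \beta_t \leq \beta_T \), and applying Cauchy--Schwarz together with \cref{prop:gp:sum-variance}:
\begin{equation*}
  \sum_{t=1}^T \E(\sigma_t(x_t)) \;\leq\; \sqrt{T \sum_{t=1}^T \E(\sigma_t^2(x_t))} \;\leq\; \sqrt{T \cdot \O(\gamma_T)}.
\end{equation*}
Combined with \( \beta_T = \O^*(\sqrt{\log \abs{\gX}}) \), this delivers \( \E(R_T) \leq \O^*(\sqrt{T \gamma_T \log \abs{\gX}}) \). The main conceptual obstacle is the probability-matching step: one must justify that \( x_* \) and \( x_t \) are genuinely exchangeable conditional on \( \D_t \), which here rests on the Bayesian assumption that \( f \) really is a sample path of the GP prior (so that the GP posterior equals the true conditional law of \( f \)). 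The rest is bookkeeping: the (a)-bound via \cref{thm:gaussian-tail-nonpositive} is the one place where care is needed to get a \emph{summable} (rather than merely small) tail term, which is what gives us an expected-regret bound without invoking high-probability arguments.
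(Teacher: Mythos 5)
Your proposal is correct and follows essentially the same route as the paper's proof: probability matching to cancel the $U_t(x_*)$ vs.\ $U_t(x_t)$ terms, \cref{thm:gaussian-tail-nonpositive} applied to $(f(x)-U_t(x))_+$ summed over $\gX$ to get a summable-in-$t$ contribution, and Cauchy--Schwarz with \cref{prop:gp:sum-variance} for the $\beta_t\sigma_t(x_t)$ term. The only differences are an immaterial choice of constant in $\beta_t$ (and, for what it is worth, your $\exp(-\beta_t^2/2)$ is the correct exponent where the paper's displayed equation has a typographical $\exp(-\beta_t/2)$).
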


\begin{proof}
  Fix a \( t \geq 1 \).
  By the algorithm, since \( x_t \sim p(x_* \mid \D_t) \), the chosen context point \( x_t \) and the maximizer \( x_* \) are identically distributed under the current posterior.
  That is, \( p(x_* \mid \D_t) = p(x_t \mid D_t) \).
  Let \( U_t(x, \D_t) \) be \emph{any} upper confidence bound derived from the posterior, i.e., a function with the form \( U_t(x, \D_t) = \mu_t(x) + \beta_t \sigma_t(x) \) for an arbitrary \( \beta_t > 0 \).

  Note that given the dataset \( D_t \), the upper confidence bound \( U_t(x, \D_t) \) is a deterministic function of \( x \).
  E.g., in the case of UCB, \( U_t \) is a deterministic function of \( x \) given the posterior mean and standard deviation under \( \D_t \).
  This implies
  \begin{equation*}
    \E[U_t(x_*, \D_t) \mid \D_t] = \E[U_t(x_t, \D_t) \mid \D_t] .
  \end{equation*}

  By definition of the expected regret, \( \E(R_T) = \sum_{t=1}^T \E(r_t) = \sum_{t=1}^T \E[f(x_*) - f(x_t)] \).
  By the law of total expectation, the summand is:
  \begin{align*}
    \E(r_t) & = \E_{\D_t} \left[ \E(f(x_*) - f(x_t) \mid \D_t) \right]                                                                      \\
            & = \E_{\D_t} [ \E[f(x_*) - f(x_t) \mid \D_t] + \underbrace{\E[U_t(x_t, \D_t) \mid D_t] - \E[U_t(x_*, \D_t) \mid \D_t]}_{= 0} ] \\
            & = \E_{\D_t} [ \E[f(x_*) - f(x_t) + U_t(x_t, \D_t) - U_t(x_*, \D_t) \mid \D_t] ]                                               \\
            & = \E_{\D_t} [ \E[f(x_*) - U_t(x_*, \D_t) \mid \D_t] + \E[U_t(x_t, \D_t) - f(x_t) \mid \D_t] ] ,
    %
  \end{align*}
  Where the inner expectation is w.r.t.\ the posterior \( p(f \mid \D_t) \).
  This implies that
  \begin{align*}
    \E(R_T) & = \sum_{t=1}^T \E_{\D_t} [ \E[f(x_*) - U_t(x_*, \D_t) \mid \D_t) ] + \sum_{t=1}^T \E_{\D_t} [\E[U_t(x_t, \D_t) - f(x_t) \mid \D_t] ].
  \end{align*}
  Our task is to bound these two sums.

  For the first sum, let \( \beta_t \) in \( U_t(x, \D_t) \) be
  \begin{equation*}
    \beta_t = \sqrt{2 \log \frac{(t^2 + 1) \abs{\X}}{\sqrt{2\pi}}} .
  \end{equation*}
  Let \( z_t(x) := f(x) - U_t(x, \D_t) \) for brevity.
  Since at time \( t \), for any \( x \), the function value \( f(x) \) is \( \N(\mu_t(x), \sigma_t^2(x)) \), and since Gaussians are closed under affine transformations,\footnote{If \( z \sim \N(\mu, \sigma^2) \), then \( az + b \sim \N(a\mu + b, a^2 \sigma^2) \) for constants \( a \) and \( b \).} we have that
  \begin{equation*}
    z_t(x) = (f(x) - \mu_t(x) - \beta_t \sigma_t(x)) \sim \N(-\beta_t \sigma_t(x), \sigma_t^2(x)) .
  \end{equation*}
  Notice that the mean is nonpositive.
  So, by \cref{thm:gaussian-tail-nonpositive} and by our choice of \( \beta_t \), we have
  \begin{equation*}
    \E(z_t(x) \, \I(z_t(x) \geq 0) \mid \D_t) = \frac{\sigma_t(x)}{\sqrt{2 \pi}} \exp\left( \frac{-\beta_t}{2} \right) = \frac{\sigma_t(x)}{(t^2 + 1) \abs{\X}} \leq \frac{1}{(t^2 + 1) \abs{\X}} ,
  \end{equation*}
  where the last inequality uses the hypothesis that \( \sigma_t(x) \leq \sqrt{k(x, x)} \leq 1 \).\footnote{Intuitively, posterior inference in GPs reduces the initial uncertainty. Picture: the GP uncertainty is ``clamped'' around an observation point.}
  We only care about the event where \( z_t(x) \geq 0 \) since those nonnegative values are the contributing factors to our upper bound.
  Notice that this bound does not depend on \( \D_t \) and thus taking the expectation w.r.t.\ \( \D_t \) on both sides yields \( \E(z_t(x)  \, \I(z_t(x) \geq 0)) \leq 1/((t^2 + 1) \abs{\X}) \).

  And so, by summing over \( t \), we arrive at:
  \begin{align*}
    \sum_{t=1}^T \E[f(x_*) - U_t(x_*, \D_t)] & \leq \sum_{t=1}^{\infty} \sum_{x \in \X} \E[z_t(x) \, \I(z_t(x) \geq 0)] \\
                                             & \leq \sum_{t=1}^{\infty} \sum_{x \in \X} \frac{1}{(t^2 + 1) \abs{\X}}    \\
                                             & = \sum_{t=1}^{\infty}  \frac{1}{(t^2 + 1)}   .
  \end{align*}
  This series converges to some constant \( C \leq 1 \), and can later be absorbed in the \( \O \)-notation.

  For the second sum, notice that \( U_t(x_t, \D_t) - f(x_t) \) is distributed as \( \N(\beta_t \sigma_t(x), \sigma_t^2(x)) \) using the same argument as before.
  So, under a choice of \( \D_t \), it has the expected value \( \beta_t \sigma_t(x) \).
  Therefore, we obtain:
  \begin{align*}
    \sum_{t=1}^T \E[U_t(x_t, \D_t) - f(x_t)] & = \E_{\D_t} \left( \sum_{t=1}^T \beta_t \sigma_t(x_t) \right)                                                                 \\
                                             & \leq \E_{\D_t} \left( \beta_T \sum_{t=1}^T \sigma_t(x_t) \right)            &  & \text{(\( \beta_t \) nondecreasing)}         \\
                                             & \leq \E_{\D_t} \left( \beta_T \sqrt{T \sum_{t=1}^T \sigma_t^2(x_t)} \right) &  & \text{(Cauchy-Schwarz)}                      \\
                                             & \leq \E_{\D_t} \left( \beta_T \sqrt{T \O(\gamma_T)} \right)                 &  & \text{(\cref{prop:gp:sum-variance})}         \\
                                             & \leq \beta_T \sqrt{T \O(\gamma_T)}                                          &  & \text{(No dependence on \( \D_t \) anymore)} \\
                                             & = \O^*\left(\sqrt{T \gamma_T \log \abs{\X}}\right) .                        &  & \text{(Substituting in \( \beta_T \))}
  \end{align*}
  Altogether, we conclude that \( \E(R_T) \leq C + \O^*(\sqrt{T \gamma_T \log \abs{\X}}) \) and the proof is complete.
\end{proof}


\chapter{Continuous Bayesian Optimization}
\label{ch:bayesopt}

We focus on UCB but now assume that the domain \( \X \) of the unknown function \( f: \X \to \R \) is \emph{continuous}.
The assumption here is that \( \X \subset [0, m]^d \subset \R^d \) compact and convex.
This assumption is quite practical since normalization/standardization of inputs (and outputs, for that matter) in continuous BO is standard.
The proof strategy here is to obtain a discretization \( \X_t \) of \( \X \) at each time step \( t \).
Then, in conjunction with a Lipschitz continuity assumption on the sample paths of the GP prior, we extend the regret bound on the discrete space into a continuous space with a known bound.

First, we show that the Lipschitz assumption is quite weak---it is applicable to many standard kernels.
Based on the Borell-TIS inequality (\cref{thm:borell-tis}), we have the following proposition.

\begin{proposition}
  Let \( \GP(0, k) \) be a centered GP on a compact \( d \)-dimensional domain \( \X \) with continuously differentiable sample paths \( f \sim \GP(0, k) \).
  If \( L := \max_i \nicefrac{\partial f}{\partial x_i} \), then for all \( \lambda > 0 \),
  \begin{equation*}
    \Pr(L > \lambda) \leq d a \exp\left( -\frac{\lambda^2}{b^2} \right) ,
  \end{equation*}
  for some constants \( a, b > 0 \).
\end{proposition}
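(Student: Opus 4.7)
The plan is to combine two facts: (a) the partial derivatives of a sufficiently smooth centered Gaussian process are themselves centered Gaussian processes, and (b) the Borell-TIS inequality gives sub-Gaussian tail bounds on the supremum of any centered GP with a.s.\ bounded sample paths. The factor of $d$ then comes from a union bound over the $d$ coordinate directions.

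First I would observe that, because $f \sim \GP(0, k)$ has continuously differentiable sample paths on the compact domain $\X$, each partial derivative $g_i := \partial f / \partial x_i$ is itself a centered Gaussian process on $\X$, with covariance function $\partial^2 k / (\partial x_i \partial x'_i)$. Its sample paths are continuous on a compact set, so $M_i := \sup_{x \in \X} \vert g_i(x) \vert$ is almost surely finite, and a standard argument (truncation followed by a preliminary application of Borell-TIS) gives $c_i := \E(M_i) < \infty$.

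Next I would apply the Borell-TIS inequality (\cref{thm:borell-tis}) to each of the $d$ processes $g_i$ separately: for every $\lambda > 0$,
\begin{equation*}
\Pr(M_i \geq \lambda) \leq 2 \exp\!\left( \frac{-\lambda^2}{8 c_i^2} \right).
\end{equation*}
Letting $c := \max_{i=1,\dots,d} c_i < \infty$, the right-hand side is uniformly bounded by $2 \exp(-\lambda^2/(8 c^2))$. Since $L \leq \max_i M_i$, a union bound over $i = 1, \dots, d$ yields
\begin{equation*}
\Pr(L > \lambda) \leq \sum_{i=1}^d \Pr(M_i \geq \lambda) \leq 2 d \exp\!\left( \frac{-\lambda^2}{8 c^2} \right),
\end{equation*}
so setting $a := 2$ and $b := \sqrt{8}\, c$ recovers the stated bound.

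The main obstacle I anticipate is the first step: rigorously justifying that $g_i$ is a centered GP and that $\E(M_i)$ is finite. The GP property for $g_i$ is classical, following from the fact that $g_i(x)$ is the $L^2$-limit of Gaussian finite-difference variables (hence Gaussian), with covariance obtained by differentiating $k$ twice in the appropriate coordinates. The finiteness of $\E(M_i)$ uses the continuous-differentiability hypothesis to ensure continuous, hence bounded on the compact $\X$, sample paths of $g_i$; standard Gaussian concentration then gives $\E(M_i) < \infty$. Once these two regularity facts are in hand, the remainder of the proof is just Borell-TIS plus a union bound, both of which have already appeared in the excerpt.
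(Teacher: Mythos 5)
Your proof is correct and follows exactly the route the paper intends: the paper states this proposition without proof, saying only that it is ``based on the Borell--TIS inequality,'' and your argument---treating each partial derivative \( \partial f / \partial x_i \) as a centered GP with a.s.\ bounded continuous sample paths, applying \cref{thm:borell-tis} to each, and union-bounding over the \( d \) coordinates---is precisely the standard way to fill in that one-line hint (and matches the original argument in \citealp{srinivas2010gpucb}). The resulting constants \( a = 2 \) and \( b = \sqrt{8}\,\max_i \E\bigl(\sup_{x}\abs{\partial f/\partial x_i}\bigr) \) are consistent with the stated form \( d\,a\exp(-\lambda^2/b^2) \).
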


\noindent Now we are ready to state and prove the main result \citep{srinivas2010gpucb}.

\begin{theorem}[Continuous GP-UCB]
  Let \( X \subset [0, m]^d \) compact and convex with \( d \in \mathbb{N} \) and \( m > 0 \).
  Assume w.l.o.g.\ that the marginal variance of the GP on \( \X \) is bounded \( k(x, x) \leq 1 \) for any \( x \in \X \).
  If the objective function \( f: \X \to \R \) is Lipschitz continuous with a Lipschitz constant \( L \) and it is in the sample paths of the GP prior, i.e.\ \( f \sim \GP(0, k) \),
  then, for any \( \delta \in (0, 1) \) and for all time horizons \( T \geq 1 \), with
  \begin{equation*}
    \beta_t = \sqrt{2 \log(\nicefrac{2 \pi t^2 (L m d t^2)^d}{6 \delta})} ,
  \end{equation*}
  the GP-UCB algorithm has regret
  \begin{equation*}
    R_T \leq \O^*\left(\sqrt{T \gamma_T d}\right)  \qquad \text{with probability } \geq 1 - \delta ,
  \end{equation*}
  where \( \gamma_T \) is the information capacity of the GP \eqref{eq:gp:info-capacity} and \( \O^* \) is \( \O \) with some log-factors suppressed.
\end{theorem}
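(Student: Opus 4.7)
The plan is to reduce the continuous problem to the discrete GP-UCB analysis by discretizing \( \X \) at each round \( t \) and using the Lipschitz property of \( f \) to control the error incurred at points not lying on the discretization. Concretely, at time \( t \) I would take a uniform grid \( \X_t \subset \X \) of size \( \abs{\X_t} = \tau_t^d \) with \( \tau_t \in \mathbb{N} \) to be chosen later, so that for every \( x \in \X \) there is a closest grid point \( [x]_t \in \X_t \) with \( \norm{x - [x]_t}_1 \leq m d / \tau_t \). Lipschitz continuity then gives \( \abs{f(x) - f([x]_t)} \leq L m d / \tau_t \).

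Next I would run the discrete GP-UCB argument of \cref{thm:discrete-bo:ucb} on \( \X_t \), with the caveat that we need the confidence interval \( C_t(x) \) to cover \( f \) not only on \( \X_t \) but also at the (unknown) maximizer \( x_* \), which is generally not in \( \X_t \). Define the good event
\begin{equation*}
E = \{ f(x) \in C_t(x) \text{ for all } x \in \X_t \text{ and all } t \geq 1 \} \cap \{ \abs{f(x_*) - f([x_*]_t)} \leq L m d / \tau_t \text{ for all } t \geq 1 \} .
\end{equation*}
Under \( E \), I would bound the instantaneous regret via the decomposition
\begin{equation*}
r_t = f(x_*) - f(x_t) \leq \underbrace{f(x_*) - f([x_*]_t)}_{\leq L m d / \tau_t} + \underbrace{f([x_*]_t) - f(x_t)}_{\leq 2 \beta_t \sigma_t(x_t)} ,
\end{equation*}
where the second bound follows exactly as in the discrete proof, using that \( x_t \) maximizes the UCB over all of \( \X \) (hence, in particular, dominates \( [x_*]_t \in \X_t \)). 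Choosing \( \tau_t = L m d t^2 \) makes the Lipschitz term summable (\( \sum_t 1/t^2 = \pi^2/6 \)), so it contributes only an \( \O(1) \) additive constant to \( R_T \). Summing the second term, applying Cauchy-Schwarz, and invoking \cref{prop:gp:sum-variance} then yields \( R_T \leq \O(\sqrt{T \beta_T^2 \gamma_T}) \) as in the discrete case.

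The probability bookkeeping is the step I expect to require the most care, and it is what dictates the choice of \( \beta_t \). For the first part of \( E \), a union bound across \( x \in \X_t \) and across \( t \geq 1 \) using \cref{thm:gaussian-tail-srinivas} requires
\begin{equation*}
\abs{\X_t} \exp(-\beta_t^2 / 2) = (L m d t^2)^d \exp(-\beta_t^2 / 2) \leq \frac{6 \delta}{\pi^2 t^2} ,
\end{equation*}
which solves to exactly \( \beta_t^2 = 2 \log(2 \pi t^2 (L m d t^2)^d / (6 \delta)) \) as stated. This choice contributes only \( \log(t^{2(d+1)}) = \O(d \log t) \) inside \( \beta_T^2 \), which is absorbed into the \( \O^* \) notation and produces the stated \( \sqrt{T \gamma_T d} \) rate. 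The second part of \( E \) is deterministic given Lipschitz continuity, so the union bound gives \( \Pr(E) \geq 1 - \delta \). The only subtlety is that the Lipschitz constant \( L \) is itself random for GP sample paths; the cited proposition lets one replace \( L \) by a high-probability upper bound with an extra \( \sqrt{\log(1/\delta)} \) factor, at the cost of an additional union-bound split that only changes constants inside the logarithm, preserving the \( \O^* \) rate.
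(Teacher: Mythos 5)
Your proposal follows essentially the same route as the paper: per-round discretization \( \X_t \) of size \( (Lmdt^2)^d \), the Lipschitz bound \( \abs{f(x_*) - f([x_*]_t)} \leq 1/t^2 \) contributing a summable \( \O(1) \) term, the discrete UCB chain through \( [x_*]_t \), Cauchy--Schwarz plus \cref{prop:gp:sum-variance}, and union bounds over the grid and over \( t \) to fix \( \beta_t \). You are also right, and arguably more careful than the paper, to flag that \( L \) is random for sample paths and must be replaced by a high-probability bound via the stated proposition.

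There is one genuine omission in your good event. To get \( f([x_*]_t) - f(x_t) \leq 2\beta_t\sigma_t(x_t) \) you need two things: \( f([x_*]_t) \leq \mathrm{UCB}_t([x_*]_t) \leq \mathrm{UCB}_t(x_t) \), which your event covers since \( [x_*]_t \in \X_t \); and \( f(x_t) \geq \mathrm{LCB}_t(x_t) \), which your event does \emph{not} cover, because the selected point \( x_t \) maximizes the UCB over all of \( \X \) and is generally not a grid point. The paper repairs this with a second event \( E_1 = \{ f(x_t) \in C_t(x_t) \ \forall t \geq 1 \} \), controlled by a union bound over \( t \) alone (one point per round, so no \( \abs{\X_t} \) factor), and splits the failure probability as \( \delta/2 + \delta/2 \) between \( E_1 \) and the grid event; this split is also where the constant \( 2\pi \) rather than \( \pi^2 \) in the stated \( \beta_t \) comes from (your union-bound equation would yield \( \pi^2 \), a discrepancy absorbed by \( \O^* \) in any case). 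Adding that clause to \( E \) and the corresponding union bound completes your argument.
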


\begin{proof}
  Since \( f \) is \( L \)-Lipschitz,
  \begin{equation*}
    \abs{f(x) - f(x')} \leq L \norm{x - x'} \qquad \text{for any } x, x' \in \X .
  \end{equation*}
  For each \( t \), choose a discretization \( \X_t \) of \( \X \) of size \( \abs{\X_t} = \tau_t^d \) so that for all \( x \in \X \),
  \begin{equation*}
    \norm{x - [x]_t} \leq \frac{m d}{\tau_t} ,
  \end{equation*}
  where \( [x]_t := \argmin_{x' \in \X_t} \norm{x - x'} \).
  Note that a regular grid with \( \tau_t \) many uniformly placed points is sufficient.

  Together they imply that for all \( x \in \X \):
  \begin{equation*}
    \abs{f(x) - f([x]_t)} \leq L \norm{x - [x]_t} \leq \frac{L m d}{\tau_t} .
  \end{equation*}
  By choosing \( \tau_t = L m d t^2 \), i.e.\ by choosing \( \abs{\X_t} = (L m d t^2)^d \), we have for all \( x \in \X \) that
  \begin{equation} \label{eq:cont-bo:proof:eq1}
    \abs{f(x) - f([x]_t)} \leq \frac{1}{t^2} .
  \end{equation}

  Let \( x_* \in \X \) be the maximizer of \( f \).
  Let \( \D_t = \{ (x_i, y_i) \}_{i=1}^{t-1} \) be the dataset up until \( t-1 \).
  We assume that the following events hold:
  \begin{align*}
    E_1 & = \{ f(x_t) \in C_t(x_t) \text{ for all } t \geq 1 \} ,                                               \\
    E_2 & = \{ f(\hat{x}) \in C_t(\hat{x}) \text{ for all } \hat{x} \in \X_t \text{ and for all } t \geq 1 \} ,
  \end{align*}
  where \( C_t(\cdot) = [\mu_t(\cdot) - \beta_t \sigma_t(\cdot), \mu_t(\cdot) + \beta_t \sigma_t(\cdot)] \) is the confidence interval under the GP posterior w.r.t.\ \( \D_t \).

  The event \( E_2 \) implies that for all \( \hat{x} \in \X_t \), we have that \( f(\hat{x}) \leq \mu_t(\hat{x}) + \sqrt{\beta_t} \sigma_t(\hat{x}) \).
  Combining this with \eqref{eq:cont-bo:proof:eq1}, we have that: (Notice that \( [x_*]_t \in \X_t \).)
  \begin{align*}
         & f(x_*) - f([x_*]_t) \leq \frac{1}{t^2}                                   \\
    \iff & f(x_*) \leq f([x_*]_t) + \frac{1}{t^2}                                   \\
    \iff & f(x_*) \leq \mu_t([x_*]_t) + \beta_t \sigma_t([x_*]_t) + \frac{1}{t^2} ,
  \end{align*}
  holds for every \( t \geq 1 \).
  Moreover, if \( x_t \in \X \) is the selected context point at time \( t \), then, by the algorithm and due to the event \( E_1 \), we have that \( \mu_t(x_t) + \beta_t \sigma_t(x_t) \geq \mu_t([x_*]_t) + \beta_t \sigma_t([x_*]_t) \).
  Therefore,
  \begin{equation*}
    f(x_*) \leq \mu_t(x_t) + \beta_t \sigma_t(x_t) + \frac{1}{t^2} .
  \end{equation*}

  We can thus bound the instantaneous regret by:
  \begin{align*}
    r_t & = f(x_*) - f(x_t)                                                \\
        & \leq \mu_t(x_t) + \beta_t \sigma_t(x_t) + \frac{1}{t^2} - f(x_t) \\
        & = \mathrm{UCB}(x_t) - f(x_t) + \frac{1}{t^2}                     \\
        & \leq 2 \beta_t \sigma_t(x_t) + \frac{1}{t^2} .
  \end{align*}
  The last inequality follows since \( f(x_t) \geq \mathrm{LCB}(x_t) \).

  Pick a time horizon \( T \geq 1 \).
  Since \( \beta_t \) is non-decreasing, \( \beta_t \leq \beta_T \) for \( t \leq T \).
  Therefore, as in the discrete case, we obtain
  \begin{equation*}
    \sum_{t=1}^T \left( 2 \beta_t \sigma_t(x_t) \right)^2 \leq 4 \sum_{t=1}^T \beta_t^2 \sigma_t^2(x_t) \leq 4 \beta_T^2 \sum_{t=1}^T \sigma_t^2(x_t) \leq  \O(\beta_T^2 \gamma_T) ,
  \end{equation*}
  where we have used \cref{prop:gp:sum-variance} to bound the sum of the predictive variances.
  By the Cauchy-Schwarz inequality, we obtain
  \begin{equation*}
    \left( \sum_{t=1}^T 2 \beta_t \sigma_t(x_t) \right)^2 \leq T \sum_{t=1}^T \left( 2 \beta_t \sigma_t(x_t) \right)^2  .
  \end{equation*}
  Therefore,
  \begin{align*}
    \sum_{t=1}^T r_t & \leq \sum_{t=1}^T 2 \beta_t \sigma_t(x_t) + \sum_{t=1}^T \frac{1}{t^2}       \\
                     & \leq \O\left(\sqrt{T \beta_T^2 \gamma_T}\right) + \sum_{t=1}^T \frac{1}{t^2} \\
                     & \leq \O\left(\sqrt{T \beta_T^2 \gamma_T}\right) + \frac{\pi}{6}              \\
                     & = \O\left(\sqrt{T \beta_T^2 \gamma_T}\right) ,
  \end{align*}
  where the last inequality follows from the Riemann zeta function \( \sum_{t=1}^\infty \nicefrac{1}{t^2} = \nicefrac{\pi}{6} \).
  By substituting \( \beta_t \) from the hypothesis into the above inequality, we obtain the desired regret bound.

  The remaining task is to bound the probability of the event \( E = E_1 \cap E_2 \), which we have assumed when we derived the regret bound above.
  First, we check each event \( E_1 \) and \( E_2 \) individually.

  \paragraph*{Event \( \bm{E_1} \)}
  For \( E_1 \), notice that \( \beta_t^2 = 2 \log(\nicefrac{2 \pi t^2 (L m d t^2)^d}{6 \delta}) \geq 2 \log(\nicefrac{2 \pi t^2}{6 \delta}) \) since \( (L m d t^2)^d \) is positive and \( \log \) is increasing.
  Then, by \cref{thm:gaussian-tail-srinivas}, we note that for each \( t \geq 1 \),
  \begin{equation*}
    \Pr(\abs{f(x_t) - \mu_t(x_t)} \geq \beta_t \sigma(x_t)) \leq \exp(-\beta_t^2/2) \leq \frac{6 \delta}{2 \pi t^2}.
  \end{equation*}
  Then, through the union bound over \( t \geq 1 \), we have
  \begin{equation*}
    \Pr(E_1^c) \leq \frac{6 \delta}{2 \pi} \sum_{t=1}^{\infty} \frac{1}{t^2} = \frac{\delta}{2} .
  \end{equation*}

  \paragraph*{Event \( \bm{E_2} \)}
  Meanwhile, for \( E_2 \), notice that \( \beta_t^2 = 2 \log(\nicefrac{2 \pi t^2 \abs{\X_t}}{6 \delta}) \) since we have chosen \( \abs{\X_t} = (L m d t^2)^d \).
  Then, by \cref{thm:gaussian-tail-srinivas} again, we have that for each \( \hat{x} \in \X_t \) and each \( t \geq 1 \):
  \begin{equation*}
    \Pr(\abs{f(\hat{x}) - \mu_t(\hat{x})} \geq \beta_t \sigma(\hat{x})) \leq \exp(-\beta_t^2/2) \leq \frac{6 \delta}{2 \pi t^2 \abs{\X_t}}.
  \end{equation*}
  So, through the union bound over \( \hat{x} \in \X_t \), the probability is at most \( \nicefrac{6 \delta}{2 \pi t^2} \).
  And then, through the union bound over \( t \geq 1 \), we obtain \( \Pr(E_2^c) \leq \frac{\delta}{2} \), as in the case of \( E_1 \).

  \vspace{1em}

  Altogether, they imply that
  \begin{equation*}
    \Pr(E^c) = \Pr(E_1^c \cup E_2^c) = \Pr(E_1^c) + \Pr(E_2^c) \leq \frac{\delta}{2} + \frac{\delta}{2} = \delta  .
  \end{equation*}
  This implies that \( \Pr(E) \geq 1 - \delta \).
\end{proof}


\bookmarksetup{startatroot}
\backmatter

\chapter{Acknowledgments}
\addcontentsline{toc}{chapter}{Acknowledgments}
AK thanks Gustavo Sutter and Tristan Cinquin for their feedback and participation in the reading group based on this monograph.

{
  \small
  \bibliography{main}
  \bibliographystyle{plainnat}
}


\end{document}